\newtheorem{theorem}{Theorem}
\newtheorem{lemma}[theorem]{Lemma}
\newcommand{\C}{\mathcal{C}}
\newcommand{\A}{\mathcal{A}}
\newcommand{\E}{\mathbb{E}}
\newcommand{\R}{\mathbb{R}}
\newcommand{\cG}{\mathcal{G}}
\newcommand{\B}{\mathcal{B}}
\DeclareMathOperator{\ALG}{ALG}
\DeclareMathOperator{\cut}{cut}
\DeclareMathOperator{\err}{dis}
\DeclareMathOperator{\agr}{agr}
\DeclareMathOperator{\Lap}{Lap}
\newtheorem{corollary}[theorem]{Corollary}
\newtheorem{observation}[theorem]{Observation}
\newtheorem{proposition}[theorem]{Proposition}
\theoremstyle{definition}
\newtheorem{definition}[theorem]{Definition}
\def\final{1}
\newcommand{\fullonly}[1]{}
\newcommand{\shortonly}[1]{#1}
\newcommand{\fullonly}[1]{#1}
\newcommand{\shortonly}[1]{}
\icmltitlerunning{Differentially Private Correlation Clustering}
\begin{document}

\twocolumn[
\icmltitle{Differentially Private Correlation Clustering}
\icmlsetsymbol{equal}{*}

\begin{icmlauthorlist}
\icmlauthor{Mark Bun}{equal,bu}
\icmlauthor{Marek Eliáš}{equal,cwi}
\icmlauthor{Janardhan Kulkarni}{equal,msr}
\end{icmlauthorlist}

\icmlaffiliation{bu}{Boston University, Boston}
\icmlaffiliation{cwi}{Centrum Wiskunde \& Informatica, Amsterdam}
\icmlaffiliation{msr}{Microsoft Research, Redmond}
\icmlcorrespondingauthor{Marek}{afds@fasd.df}
\icmlkeywords{Differential Privacy, Correlation Clustering}

\vskip 0.3in
]

\printAffiliationsAndNotice{\icmlEqualContribution}

\begin{abstract}
Correlation clustering is a  widely used technique in unsupervised machine learning.
Motivated by applications where individual privacy is a concern, we initiate the study of differentially private correlation clustering.
We propose an algorithm that achieves subquadratic additive error compared to the optimal cost. 
In contrast, straightforward adaptations of  existing non-private algorithms all lead to a trivial quadratic error.
Finally, we give a lower bound showing that any pure differentially private algorithm for correlation clustering requires additive error of $\Omega(n)$.
\end{abstract}

\section{Introduction}

Correlation clustering is a fundamental task in unsupervised machine learning. Given a set of objects and information about whether each pair is ``similar'' or ``dissimilar,'' the goal is to partition the objects into clusters that are as consistent with this information as possible. The correlation clustering problem was introduced by \citet{BansalBC02} and has since received significant attention in both the theoretical and applied machine learning communities. It has been successfully applied in numerous domains, being used to perform co-reference resolution \citep{ZHENG20111113}, image segmentation \citep{kim2014image}, gene clustering~\citep{genes}, and cancer mutation analysis \citep{CCcancer}.

In many important settings, the relationships between the objects we wish to cluster may depend on sensitive personal information about individuals. For example, suppose we wish to perform entity resolution on a collection of companies by clustering those that likely belong to the same organizational structure. For example, we would like to group Amazon Marketplace, Amazon Fresh, and less obviously, Twitch into the same cluster. The identities of the companies are public information, but our information about the relationships between them may come from sensitive information, e.g., from transaction records and personal communications. Moreover, the information we have on a relationship could be dramatically affected by individual data records: for instance, a single email with the phrase, ``This matter has been escalated from General Electric to the board meeting of the Sheinhardt Wig Corporation'' would be highly indicative of a similarity between two companies that all other evidence may point toward being extremely dissimilar.
We initiate the study of \emph{differentially private correlation clustering}, using edge level privacy, to address individual privacy concerns in such scenarios.

In this work, we design efficient algorithms for several different formulations of the private correlation clustering problem. In all of these variants, objects are represented as a set of vertices $V$ in a graph. Two vertices are connected by an edge with either positive or negative label if we have information about their similarity, e.g., coming from the output of a comparison classifier. Of special interest is when the graph is \emph{complete}, i.e., we possess similarity information about all pairs of vertices, though we also consider the problem for general graph topologies. Edges in the graph may be either \emph{unweighted} or \emph{weighted},
where the weight of an edge can be viewed as the confidence with which its endpoints are similar (positive label) or dissimilar
(negative label). 

A perfect clustering of the graph would be a partition of $V$ into clusters $C_1, \dots, C_k$ such that all positive-labeled edges connect vertices in the same cluster and all negative-labeled edges connect vertices in different clusters. In general, similarity information may be inconsistent, so no such clustering may exist. Thus, we define two problems corresponding to optimizing two related objective functions. In the Minimum Disagreement (MinDis) problem, we aim to minimize the total weight of violated edges,  i.e., the sum of the weights of positive edges that cross clusters plus the sum of the weights of negative edges within clusters.  The Maximum Agreement (MaxAgr) problem is to maximize the sum of weights of positive edges within clusters plus the sum of weights of negative edges across clusters. Note that the number of clusters $k$ is generally not specified in advanced. For both problems, we study algorithms with mixed multiplicative and additive guarantees, i.e., algorithms that report clusterings with MinDis $\le \alpha \cdot \operatorname{OPT} + \beta$ or MaxAgr $\ge \alpha \cdot \operatorname{OPT} - \beta$.

\subsection{Our results and Techniques}
All the formulations of private correlation clustering we consider admit algorithms with low additive error (and no multiplicative error) based on the \emph{exponential mechanism}~\cite{McSherryT07}, a generic primitive for solving discrete optimization problems. Observing that both the MinDis and MaxAgr objective functions have global sensitivity $1$, instantiating the exponential mechanism over the search space of all possible partitions gives an algorithm with additive error $O(n \log n)$, where $n$ is the number of vertices.

Our first result shows that the error achievable by the exponential mechanism is nearly optimal for path graphs.

\begin{theorem}[Informal]
\label{thm:lowerbound}
Any $\epsilon$-differentially private algorithm for correlation clustering on paths with either the MinDis or MaxAgr objective function has an additive error of $\Omega(n)$.
\end{theorem}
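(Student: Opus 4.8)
The plan is to first reduce the problem to a clean combinatorial one and then apply a standard packing argument from differential privacy. The crucial structural observation is that a path is a tree, so \emph{every} labeling of its $n-1$ edges admits a perfect clustering: cut the path exactly at the negatively labeled edges and merge consecutive vertices joined by a positively labeled edge. Hence $\operatorname{OPT}=0$ for MinDis (and equals the number of edges for MaxAgr), the multiplicative term disappears, and the additive error of a clustering is exactly its cost. Identifying a label vector with a string $x\in\{0,1\}^{n-1}$ (say $1$ for positive) and associating to any clustering its \emph{cut pattern} $b\in\{0,1\}^{n-1}$ (with $b_i=1$ iff the endpoints of edge $i$ share a cluster), one checks that the cost contributed by edge $i$ is $\mathbf{1}[x_i\neq b_i]$, so the total cost equals the Hamming distance $d_H(x,b)$. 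Since every $b$ is realizable as a cut pattern and neighboring inputs differ in a single edge label (one coordinate of $x$), both objectives reduce to the task of privately outputting a string within small Hamming distance of the input.

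For the lower bound itself I would use a packing argument. Fix a small constant $\delta>0$ and invoke the Gilbert--Varshamov bound to obtain a code $\mathcal{C}=\{c_1,\dots,c_N\}\subseteq\{0,1\}^{n-1}$ of size $N=2^{\Omega(n)}$ whose codewords are pairwise at Hamming distance at least $\delta n$. Around each codeword place the success region $S_j=\{b:d_H(b,c_j)\le \delta n/3\}$; because the codewords are $\delta n$-separated, these regions are pairwise disjoint. Suppose toward a contradiction that an $\epsilon$-differentially private mechanism $M$ achieves additive error at most $\gamma n$ for a sufficiently small constant $\gamma$. By Markov's inequality applied to the nonnegative cost, on input $c_j$ the output lands in $S_j$ with probability at least $2/3$.

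It then remains to derive a contradiction via group privacy. Treating each $c_j$ as a dataset differing from a fixed center $c_1$ in $d_H(c_1,c_j)\le n-1$ edge labels, applying $\epsilon$-differential privacy $d_H(c_1,c_j)$ times gives
\begin{equation*}
\Pr[M(c_1)\in S_j]\ \ge\ e^{-\epsilon\,d_H(c_1,c_j)}\,\Pr[M(c_j)\in S_j]\ \ge\ \tfrac{2}{3}\,e^{-\epsilon(n-1)}.
\end{equation*}
Summing over the $N$ disjoint regions $S_j$ and using that the probabilities sum to at most $1$ yields $1\ge \tfrac{2}{3}\,N\,e^{-\epsilon(n-1)}$, which is false once $N\gg e^{\epsilon n}$. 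Since $N=2^{\Omega(n)}$, this contradiction holds whenever $\epsilon$ is a small enough constant, proving that the error must be $\Omega(n)$.

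The main obstacle is the quantitative balancing in this last step: group privacy inflates probabilities by $e^{\epsilon\,d_H(c_1,c_j)}$, and the distance from the center can be as large as $n-1$, so the $e^{-\epsilon(n-1)}$ loss is enormous. To beat it the code must be exponentially large, which forces its rate $\log_2 N/n$ to exceed roughly $\epsilon/\ln 2$; this is exactly the constraint that pins down how small $\delta$ (and hence the hidden constant in $\Omega(n)$) can be, and why the statement is for constant $\epsilon$. A secondary point to get right is the bookkeeping for MaxAgr, where the optimum equals the number of edges and the additive suboptimality is again $d_H(x,b)$, so the identical packing bound applies.
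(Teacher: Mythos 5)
Your proposal follows essentially the same route as the paper: the observation that every labeled path is perfectly clusterable (so the additive error equals the full cost), a Gilbert--Varshamov code, and a packing argument via group privacy against $2^{\Omega(n)}$ pairwise-disjoint success regions. Your cut-pattern reduction, identifying a clustering's cost with the Hamming distance $d_H(x,b)$, is a cleaner packaging of the paper's Lemma on paths (which states the same disjointness via pigeonhole: no clustering has error below $d/2$ on two inputs at distance $d$), and your Markov-plus-disjointness accounting matches the paper's almost line by line. The one substantive difference is the issue you flag at the end but do not resolve: since the Gilbert--Varshamov rate is at most $1-h(\delta)<1$, your unweighted argument only produces a contradiction when $\epsilon$ is below a fixed universal constant (the paper states $\epsilon \le 0.2$ for its parameters $\alpha = 0.4/\log_2 e$, $\beta = 0.1$). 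The paper removes this restriction by moving to \emph{weighted} paths: give every edge weight $\lambda = \alpha/(2\epsilon)$, so that codewords at Hamming distance $\beta n$ become inputs at privacy distance $\lambda \beta n$ while the group-privacy loss over the whole path is only $e^{\epsilon \lambda n} = e^{\alpha n /2}$, which the code size $e^{\alpha n}$ beats for \emph{every} constant $\epsilon$; this yields expected error $\Omega(n/\epsilon)$ on weighted paths, hence $\Omega(n)$ for all constant $\epsilon$, not just small ones. So your proof is correct as far as it goes, but to establish the theorem as stated for arbitrary constant $\epsilon$ you need this weight-scaling step (or some substitute for it), which is exactly the resolution of the quantitative obstacle you identified.
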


This lower bound raises the natural question of whether we can sample efficiently from the exponential mechanism for correlation clustering. 
Unfortunately, we do not know if this is possible.
Moreover, the APX-hardness of both MinDis and MaxArg \citep{BansalBC02} suggests that even non-private algorithms require multiplicative error larger than one.
We aim to design polynomial-time algorithms achieving a comparable additive error to the exponential mechanism and with minimal multiplicative error.

A natural place to start is to modify the existing algorithms for the problem from the non-private setting.
Correlation clustering has been studied extensively in the approximation algorithms, online algorithms, and machine learning communities \citep{BansalBC02,mathieu_et_al:LIPIcs:2010:2486,NIPS2015_b53b3a3d},
and many algorithms are known with strong provable guarantees.
Consider the algorithm of \citet{AilonCN05}, which solves the MinDis problem on unweighted complete graphs with multiplicative error at most $3$.
It is an iterative algorithm that proceeds as follows. 
In each iteration, pick a random vertex to be a pivot.
All the neighbors of the pivot vertex connected to it with a positive edge are added to form a new cluster,
and removed from the graph.
The process is repeated until there are no more vertices left in the graph.
\citet{AilonCN05} show via a careful charging argument on the triangles of the graph that this produces a 3-approximation to the optimal solution.

One way to make the algorithm of \citet{AilonCN05} differentially private is to use the exponential mechanism with an appropriate scoring function -- a strategy reminiscent of the approach used in submodular maximization problem \cite{Mitrovic2017DifferentiallyPS} -- or  to use the randomized response algorithm to decide whether a neighbor of the pivot vertex should be added to the new cluster in each iteration.
However, these strategies could lead to $\Omega(n^2)$ error as can be seen by running the algorithm of \citet{AilonCN05} on a complete graph with all edges having negative labels.
We hit similar roadblocks for other approaches to correlation clustering  based on metric space embedding \cite{ChawlaMSY15}.
Despite our efforts, we could not make existing algorithms for correlation clustering achieve any non-trivial sub-quadratic error.

Our main result is an efficient differentially private algorithm for correlation clustering with sub-quadratic error.
The following theorem is our main technical contribution.

\begin{theorem}
\label{thm:upperboundforCC}
There is an $\epsilon$-DP algorithm for correlation clustering on complete graphs guaranteeing 
      $$\err(\C, G) \leq 2.06 \err(\C^*,G) + O\left(\frac{n^{1.75}}{\epsilon}\right),$$
where $\err(\C^*,G)$, $\err(\C,G)$ denote the MinDis costs of an optimal clustering and of our algorithm's clustering, respectively. Moreover, there is an $(\epsilon, \delta)$-DP algorithm for general graphs with 
      $$\err(\C, G) \leq O(\log n) \err(\C^*,G) + O\left(\frac{n^{1.75}}{\epsilon}\right).$$  
\end{theorem}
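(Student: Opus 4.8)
The plan is to inherit both multiplicative factors from known (non-private) LP-based approximation algorithms and to spend the privacy budget on a carefully localized rounding step, so that the additive error stays subquadratic. Concretely, I would start from the standard correlation-clustering relaxation that introduces a variable $x_{uv} \in [0,1]$ for every pair, interpreted as a fractional ``distance'' ($0$ for same cluster, $1$ for different), minimizing $\sum_{uv \in E^+} x_{uv} + \sum_{uv \in E^-}(1-x_{uv})$ subject to triangle inequalities. This LP is a lower bound on $\err(\C^*,G)$, and rounding it with the algorithm of \citet{ChawlaMSY15} yields a clustering whose cost is at most $2.06$ times the LP value on complete graphs, while region-growing rounding yields the $O(\log n)$ factor on general graphs. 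Thus, if the rounded clustering can be produced privately while paying only an additive penalty, the multiplicative guarantees follow.

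The key observation I would exploit for privacy is that the LP objective is \emph{linear} in the edge labels and that two neighboring graphs (differing in a single edge) change exactly one objective coefficient. This is precisely what rules out the naive approach: applying randomized response to all $\binom{n}{2}$ edges and running a non-private algorithm, as illustrated by the all-negative clique in the excerpt, flips $\Theta(n^2)$ labels and forces the trivial additive error $\Omega(n^2)$. Instead, I would privatize only aggregate, low-sensitivity statistics and separate vertices by cluster size using a threshold $t$. Clusters of size exceeding $t$ number at most $n/t$, so their coarse structure can be recovered by adding noise to a small number of counts; vertices lying in clusters of size at most $t$ are comparatively cheap to misassign, since each misassigned vertex incurs cost bounded by roughly its cluster's size or degree. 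Balancing the number of noise-induced misclassifications (growing with $1/\epsilon$ and with $n/t$) against the per-vertex cost (growing with $t$) is what should produce the $O(n^{1.75}/\epsilon)$ additive term, and I would set $t$ accordingly.

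Given the clustering, the cost accounting is routine: on the true graph the returned clustering $\C$ should satisfy $\err(\C,G) \le \alpha \cdot \mathrm{LP} + (\text{additive noise error}) \le \alpha \cdot \err(\C^*,G) + O(n^{1.75}/\epsilon)$, with $\alpha = 2.06$ for complete graphs and $\alpha = O(\log n)$ for general graphs. The privacy analysis differs between the two regimes: for the complete-graph case a single Laplace-noised selection (or exponential-mechanism step) should suffice for pure $\epsilon$-DP, whereas the general-graph region-growing procedure makes several adaptive data-dependent choices, so I would compose them via the Gaussian mechanism and advanced composition, which is what degrades the guarantee to $(\epsilon,\delta)$-DP.

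The hard part will be the private rounding itself: turning a data-dependent fractional solution into a discrete clustering whose \emph{output distribution} is differentially private, without reintroducing $\Theta(n^2)$ additive error. The crux is to show that the number of vertices whose assignment is corrupted by the injected noise is only $\tilde{O}(n^{3/4}/\epsilon)$ and that each such vertex contributes at most $O(n)$ to the disagreement cost, so that their product matches the claimed $n^{1.75}$ bound. Making this simultaneously consistent with privacy --- in particular choosing the threshold $t$ and the noise scale so that the mechanism is genuinely $\epsilon$-DP (resp. $(\epsilon,\delta)$-DP) while the rounding guarantee of \citet{ChawlaMSY15} (resp. region growing) is preserved up to the additive slack --- is where I expect the main technical effort to lie.
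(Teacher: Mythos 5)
There is a genuine gap: your proposal is an architecture sketch whose load-bearing component---a concrete differentially private rounding mechanism---is explicitly deferred (``where I expect the main technical effort to lie''). The central quantitative claim, that only $\tilde{O}(n^{3/4}/\epsilon)$ vertices are corrupted and each costs $O(n)$, is asserted rather than derived from any mechanism, and the sensitivity analysis it would require is nontrivial: the optimal fractional LP solution is a high-sensitivity object (flipping one edge label can move $x$ in $\Omega(n^2)$ coordinates), so ``privatizing only aggregate, low-sensitivity statistics'' begs the question of \emph{which} statistics suffice to execute the rounding of \citet{ChawlaMSY15}, which consumes the entire vector $(x_{uv})$. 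Your accounting also does not balance as stated: noisy recovery of the $\le n/t$ large clusters against per-vertex cost $\sim t$ for small ones gives an $n/\epsilon$-type trade-off, not $n^{1.75}/\epsilon$; the exponent $1.75$ has to come from somewhere, and your sketch never produces it. Finally, your attribution of the $(\epsilon,\delta)$ loss on general graphs to composing adaptive region-growing choices is off-target: no composition over the approximation algorithm's choices is needed if (as in the paper) all data access is front-loaded.

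The paper's route avoids exactly the obstacle you are stuck on, by never privatizing the optimization at all. It releases a DP \emph{synthetic graph}: Laplace noise of scale $1/\epsilon$ on every edge of $G^+$ and $G^-$ separately \`a la \citet{GRU12} for complete graphs (with an LP-based merge into an unweighted graph), and the cut-approximation mechanism of \citet{2019_DP-cut} for weighted/general graphs---the latter being inherently $(\epsilon,\delta)$-DP, which is the sole source of the approximate-DP relaxation. The $2.06$-approximation (or the $O(\log n)$ algorithm of Demaine et al., after a $v^+/v^-$ vertex-splitting gadget) then runs on $H$ purely as post-processing. Two facts make this subquadratic where you feared $\Omega(n^2)$: zero-mean noise concentrates over cuts, so \emph{all} cut values are preserved to $\tilde{O}(n^{3/2}/\epsilon)$ simultaneously, and $\err(\C,\cdot)$ for a $k$-clustering is a sum of $2k$ cut queries, hence preserved to $\tilde{O}(kn^{3/2}/\epsilon)$; and any clustering can be coarsened to $k'=n^{1/4}$ clusters by packing small clusters into bins of $O(n/k')$ vertices at extra cost $O(n^2/k')$. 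Balancing $k'n^{3/2}/\epsilon$ against $n^2/k'$ at $k'=n^{1/4}$ yields the $n^{1.75}$ bound---this is the paper's realization of your threshold-$t$ intuition, but attached to an actual mechanism. Note the irony: you dismissed noising all $\binom{n}{2}$ edges as forcing $\Omega(n^2)$ error, but that failure mode is specific to algorithms making \emph{per-edge} decisions (randomized response plus the pivot algorithm); the paper's algorithm does noise every edge and survives because its analysis lives at the level of per-cut aggregates and bounded cluster count.
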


The multiplicative approximation factors in the above theorem match the best known approximation factors
in the non-private setting~\citep{BansalBC02,ChawlaMSY15}, which are known to be near optimal.
Our results also extend to other objective functions such as MaxAgr, and to other variants of the problem where one requires that the number of clusters output by the algorithm is at most some small constant $k$. 
We discuss the extensions of our main theorem to these settings in Section \ref{sec:alg}.

The techniques we use to prove Theorem \ref{thm:upperboundforCC} are based on  private synthetic graph release.
We use recent work of \citet{GRU12} and \citet{2019_DP-cut} to release synthetic graphs preserving all of the cuts on the set of all positive edges, and on the set of all negative edges.
We then appeal to non-private approximation algorithms to obtain good clusterings on these synthetic graphs.
Finally, our sub-quadratic error bound is obtained by coarsening the clusters produced by our algorithm, and establishing a structural property that  any instance of the problem
has a good solution with a small number of clusters. 

There are relatively few problems in graph theory that admit accurate differentially private algorithms. Our result adds to this short list, by giving the first non-trivial bounds for the problem. 
However, we believe that there is a DP-correlation clustering algorithm that runs in polynomial time and matches the additive error of the exponential mechanism. 
This is an exciting open problem given the prominent position correlation clustering occupies both in theory and practice. 

%

\section{Preliminaries}

\subsection{Correlation clustering}
We survey the basic definitions and most important results on
correlation clustering in the non-private setting.

\begin{definition}
Let $G$ be a weighted graph with non-negative weights and let $E^+$ and $E^-$ denote
the sets of edges with positive and negative labels, respectively.
Given a clustering $\C = \{C_1, \dots, C_k\}$, we say that an edge in $e\in E^+$ agrees with $\C$
if its both endpoints belong to the same cluster. Similarly,
$e\in E^-$ agrees with $\C$ if its endpoints belong to different clusters.
We define the {\em agreement} $\agr(\C,G)$ between $\C$ and $G$
as the total weight of edges agreeing with $\C$,
and the {\em disagreement} $\err(\C,G)$ as the total
weight of edges which do not agree with $\C$.
\end{definition}

In MinDis problem,  we want to find a clustering $\C$ which minimizes
$\err(\C,G)$ on the input graph $G$.
Similarly, in MaxAgr, we want to maximize
$\agr(\C,G)$.

Correlation clustering is known to be much easier on unweighted complete graphs,
where there are several constant-approximation algorithm for MinDis
\citep{BansalBC02,AilonCN05,ChawlaMSY15} and a PTAS for MaxAgr \citep{BansalBC02}.

\begin{proposition}[\citet{ChawlaMSY15}]
\label{alg:chawla}
There is a polynomial-time algorithm for MinDis on unweighted complete graphs
with approximation ratio $2.06$.
\end{proposition}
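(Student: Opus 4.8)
The plan is to prove this by linear-programming relaxation and rounding, the standard route to constant-factor guarantees for MinDis on unweighted complete graphs. First I would write the natural LP: introduce a variable $x_{uv}\in[0,1]$ for every pair of vertices, interpreted as a ``distance'' that is $0$ when $u,v$ are placed together and $1$ when they are separated. The objective is to minimize
$$\sum_{(u,v)\in E^+} x_{uv} \;+\; \sum_{(u,v)\in E^-}(1-x_{uv}),$$
subject to the triangle inequalities $x_{uv}\le x_{uw}+x_{wv}$ for all triples $u,v,w$. Every genuine clustering induces a feasible integral solution, so this is a relaxation, and its optimal value $\mathrm{LP}$ satisfies $\mathrm{LP}\le \err(\C^*,G)$; it can be solved in polynomial time.

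Next I would round the fractional metric $x$ into an actual clustering using a pivot-based scheme. Repeatedly pick a uniformly random still-unclustered vertex $w$ to be a pivot and open a new cluster containing $w$; each remaining vertex $u$ is added to this cluster with a probability given by a rounding function $f(x_{wu})\in[0,1]$ of its LP-distance to the pivot, while vertices not added stay for later iterations and $w$ is removed. The choice of $f$ is the heart of the matter: a hard threshold $f=\mathbf{1}[x_{wu}<1/2]$ already yields a constant factor, but reaching $2.06$ requires a carefully tuned $f$ (constant on an initial interval and following a specific quadratic-type profile thereafter).

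The analysis is a local charging argument. For a fixed pair $(u,v)$, consider the first iteration whose pivot $w$ causes $u$ and $v$ to be clustered together or separated; conditioning on this event, the expected cost the algorithm pays on $(u,v)$ is an explicit function of $x_{uv}$, $x_{uw}$, $x_{wv}$ and of $f$, whereas the LP pays exactly $x_{uv}$ on a positive edge and $1-x_{uv}$ on a negative edge. Summing over all pairs, it suffices to show that in every triangle the ratio of expected algorithmic cost to LP cost is at most $2.06$. This reduces the whole problem to a finite-dimensional optimization --- a factor-revealing program over triangle configurations $(x_{uv},x_{uw},x_{wv})$ constrained by the triangle inequalities --- whose value, for the optimal $f$, is $2.06$.

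The hard part is precisely this last step: jointly designing $f$ and certifying that the worst-case triangle ratio equals $2.06$ rather than the weaker constant ($3$) given by the purely combinatorial Ailon--Charikar--Newman pivot. One must split the analysis by edge type (positive versus negative) and by whether one or both endpoints are ever chosen as pivots, set up the corresponding factor-revealing LP correctly, and verify the bound either in closed form or via a rigorous numerical certificate. Identifying the optimal rounding function and discharging this optimization is where essentially all of the difficulty lies.
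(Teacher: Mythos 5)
This proposition is imported by the paper as a black-box citation of \citet{ChawlaMSY15}; the paper contains no proof of it, so there is nothing internal to compare against. Your outline faithfully reconstructs the actual argument of the cited work --- the metric LP with triangle inequalities, pivot-based rounding with a tuned probability function $f$ of the LP distance, and the triangle-by-triangle charging reduced to a factor-revealing optimization --- and you correctly identify that the entire difficulty sits in choosing $f$ and certifying the worst-case triangle ratio of $2.06$, which your sketch states but does not discharge. As a blind reconstruction of the cited proof strategy it is accurate; as a self-contained proof it is incomplete only in that final (and admittedly central) verification step, which in \citet{ChawlaMSY15} occupies the bulk of the technical work.
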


\begin{proposition}[\citet{BansalBC02}]
\label{alg:bansal}
For every constant $\gamma > 0$, there is a polynomial-time algorithm
for MaxAgr on unweighted complete graphs with approximation ratio
$(1-\gamma)$.
\end{proposition}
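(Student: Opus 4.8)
The plan is to exploit the fact that MaxAgr on a complete unweighted graph is a \emph{dense} problem and to apply an exhaustive-sampling PTAS. The first observation is that $\operatorname{OPT}$ is large: placing all vertices in a single cluster agrees with every positive edge, while placing them all in singletons agrees with every negative edge, so one of these two trivial clusterings already has agreement at least $\frac{1}{2}\binom{n}{2} = \Omega(n^2)$. Hence $\operatorname{OPT} = \Omega(n^2)$, which means that an additive error of $O(\gamma n^2)$ translates into a multiplicative factor of $(1-O(\gamma))$; rescaling $\gamma$ then yields the claimed $(1-\gamma)$ guarantee. All subsequent steps therefore aim only at an \emph{additive} $O(\gamma n^2)$ guarantee.

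Second, I would prove a structural lemma that restricts attention to clusterings with a constant number of nontrivial clusters. Take an optimal clustering $\C^*$ and call a cluster \emph{large} if it has at least $\gamma n$ vertices; there are at most $1/\gamma$ large clusters. Dissolve every non-large cluster into singletons. The only agreements that can be destroyed are positive edges lying inside the dissolved clusters, and their total number is at most $\sum_i \binom{s_i}{2} \le \frac{\gamma n}{2}\sum_i s_i \le \frac{\gamma}{2}n^2$, where the $s_i < \gamma n$ are the sizes of the dissolved clusters; meanwhile negative edges inside dissolved clusters only \emph{gain} agreement. Thus there is a near-optimal clustering consisting of at most $k = 1/\gamma$ large clusters together with singletons, losing only $O(\gamma n^2)$ in agreement.

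Third, I would recast the problem as a labeling problem with $L = k+1 = O(1/\gamma)$ labels, one per candidate large cluster plus a distinguished ``singleton'' label (two vertices sharing the singleton label are understood to lie in distinct clusters, so between them a positive edge disagrees and a negative edge agrees). On this bounded-label instance I would run a Goldreich--Goldwasser--Ron style sampling PTAS: draw a random sample $S$ of $O(\gamma^{-2}\log n)$ vertices, enumerate all $L^{|S|}$ labelings of $S$, and for each guessed labeling assign every remaining vertex $v$ the label maximizing an estimate of its marginal agreement, where the estimate inspects only the edges from $v$ into $S$ and scales up by $n/|S|$. Output the best actual clustering found over all guesses. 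The enumeration costs $L^{O(\gamma^{-2}\log n)} = n^{O(\gamma^{-2}\log(1/\gamma))}$, which is polynomial for each fixed $\gamma$.

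The hard part is the analysis of this last step. One fixes the guess that agrees with the optimal reduced labeling on $S$ and shows that the sample-based marginal estimates are, with high probability, accurate to within an additive $O(\gamma n)$ per vertex; this uses Hoeffding's inequality for the edges from each $v$ to the random set $S$, followed by a union bound over the $n$ vertices and the $L$ labels, which is precisely why the sample must have size $\Theta(\gamma^{-2}\log n)$. Summing the per-vertex slack gives total additive loss $O(\gamma n^2)$, matching the density budget established above. Care is needed because the greedy assignment is sequential in spirit while the estimates are computed against a fixed reference labeling, and because the singleton label must be treated separately (a singleton vertex contributes no within-cluster positive agreement to later vertices); verifying that the marginal-contribution estimator remains unbiased and well concentrated in the presence of this special label is the most delicate point, after which rescaling $\gamma$ closes the argument.
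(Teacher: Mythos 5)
The paper offers no proof of this proposition: it is imported verbatim from \citet{BansalBC02}, so your attempt must be measured against the known proof in the literature. There, your first two steps are exactly right and match the actual argument: the density bound $\mathrm{OPT} \geq \frac{1}{2}\binom{n}{2}$ (one cluster vs.\ all singletons) reduces the task to an additive $O(\gamma n^2)$ guarantee, and the structural lemma dissolving clusters of size below $\gamma n$ into singletons, at a cost of at most $\sum_i \binom{s_i}{2} \leq \frac{\gamma}{2} n^2$ lost positive agreements, reduces to $O(1/\gamma)$ labels plus a singleton label. Both the computation and the label formulation are sound and faithful to \citet{BansalBC02}.

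The genuine gap is in your third step: the one-shot scheme --- a single shared sample $S$, one guessed labeling, and a \emph{parallel} greedy assignment of every remaining vertex against that fixed reference --- is not merely delicate, it is incorrect, and the failure is not where you locate it (the singleton label) but in the interaction term among simultaneously assigned vertices. Because all non-sample vertices score labels against the \emph{same} guessed labeling of the \emph{same} sample, vertices with identical edge profiles receive identical (noisy) estimates and move en bloc. Concretely: take $A, B, M$ of size roughly $n/3$ each, with positive edges inside $A$, inside $B$, and from $M$ to $A \cup B$, negative edges inside $M$ and between $A$ and $B$, and make $|A|$ larger than $|B|$ by $\gamma n$ (exceeding the $O(\gamma n/\sqrt{\log n})$ sampling noise). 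The optimum splits $M$ between the two big clusters, harvesting about half of the $\approx \binom{n/3}{2}$ internal negative edges of $M$; singletons are strictly worse for each $m \in M$ (losing $2n/3$ positive edges to gain $n/3$ negative ones). Yet under \emph{every} guessed labeling of $S$, every $m \notin S$ computes the same estimates with the same bias toward the $A$-cluster, so all of $M$ lands in one cluster and the output is $\Theta(n^2)$ below optimal --- the hybrid decomposition $\agr(\hat{x},\hat{x}) = \agr(x^*,x^*) + [\agr(\hat{x},x^*) - \agr(x^*,x^*)] + [\agr(\hat{x},\hat{x}) - \agr(\hat{x},x^*)]$ controls only the first bracket, and your sketch never bounds the second. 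The standard repair, used in the Goldreich--Goldwasser--Ron framework that \citet{BansalBC02} invoke (and in \citet{GiotisG06}), is to assign vertices sequentially in $O(1/\gamma)$ batches, scoring each vertex \emph{exactly} against already-placed vertices and using the sample estimate only for the not-yet-assigned remainder; intra-batch interactions then contribute at most $n^2/m \leq \gamma n^2$ over $m = 1/\gamma$ batches, and a telescoping argument closes the bound. Your remark that ``the greedy assignment is sequential in spirit'' shows you sensed the problem, but asserting it reduces to verifying unbiasedness of the estimator is wrong: without the batched, partially exact scoring, no amount of concentration rescues the coordinated misassignment above.
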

\shortonly{\vspace{-2mm}}

On weighted graphs (with edges of weight 0 being especially problematic; see
\citet{JafarovKMM20}), there are algorithms achieving an approximation ratio of
$O(\log n)$ for MinDis \citep{DemaineEFI06,CharikarGW03}
and 0.7666 for MaxAgr \citep{Swamy04,CharikarGW03}.

\begin{proposition}[\citet{DemaineEFI06}]
\label{alg:demain}
There is a polynomial-time algorithm for MinDis on general weighted graphs
with approximation ratio $O(\log n)$.
\end{proposition}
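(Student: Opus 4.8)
The plan is to prove the proposition through the standard metric linear-programming relaxation of MinDis combined with the \emph{region-growing} rounding technique developed for minimum multicut; indeed, \citet{DemaineEFI06} show that MinDis on general weighted graphs is, up to approximation factors, equivalent to minimum multicut, and the $O(\log n)$ factor is precisely what region growing delivers. First I would introduce a variable $x_{uv} \in [0,1]$ for every pair of vertices, interpreting $x_{uv}$ as a distance that should be $0$ when $u,v$ lie in the same cluster and $1$ when they are separated. The relaxation minimizes $\sum_{(u,v)\in E^+} w_{uv}\,x_{uv} + \sum_{(u,v)\in E^-} w_{uv}\,(1-x_{uv})$ subject to the triangle inequalities $x_{uv} \le x_{uw} + x_{wv}$ over all triples and the box constraints $0 \le x_{uv} \le 1$. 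Every clustering maps to a feasible integral solution of equal cost, so the LP optimum $F$ satisfies $F \le \err(\C^*,G)$; as the program has $O(n^2)$ variables and $O(n^3)$ constraints, it can be solved in polynomial time.

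Next I would round the resulting semimetric $x$ by iteratively carving out clusters. Pick any surviving vertex $s$ and consider the balls $B(s,r) = \{v : x_{sv} \le r\}$; assign to each radius a \emph{volume} $V(r)$ equal to a seed $F/n$ plus the fractional positive-edge cost the LP places inside $B(s,r)$, chosen so that $V'(r)$ equals the weight $\cut(r)$ of positive edges crossing the ball boundary. The core region-growing lemma states that some cut radius $r^* \in [0,1/4)$ satisfies $\cut(r^*) \le O(\log n)\cdot V(r^*)$: otherwise $V'(r) > \lambda V(r)$ with $\lambda = \Omega(\log n)$ would hold throughout, forcing $V$ to grow by a factor $e^{\lambda/4} = \mathrm{poly}(n)$ and exceed the total volume $2F$. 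I output $B(s,r^*)$ as one cluster, remove it, and recurse on the remaining vertices.

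To bound the cost I would split the disagreement of the rounded clustering into cut positive edges and violated negative edges. The positive edges cut across clusters are charged to the volumes of the balls that produced them; since the $n$ seed terms sum to at most $F$ and the fractional masses of disjoint balls sum to at most $F$, their total weight is $O(\log n)\cdot F$. A negative edge is violated only when both endpoints fall inside a single ball, and then the radius bound $r^* < 1/4$ gives $x_{uv} \le x_{su} + x_{sv} < 1/2$ by the triangle inequality, so the LP already paid $w_{uv}(1-x_{uv}) > w_{uv}/2$; these edges therefore contribute at most $2F$. Adding the two estimates, the clustering has disagreement at most $O(\log n)\,F \le O(\log n)\,\err(\C^*,G)$, as required.

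The main obstacle is the region-growing lemma and the coupling it requires between the two parts of the objective: the chosen radius must simultaneously make the positive-edge boundary cheap relative to the enclosed volume — which is where the unavoidable $\log n$ loss enters — and stay below $1/2$ so that the triangle inequality keeps the negative-edge charging at a constant factor. Verifying that a single radius in $[0,1/4)$ meets both demands, via the volume/potential growth argument with the correctly chosen seed $F/n$, is the technical heart of the proof.
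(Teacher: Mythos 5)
The paper states this proposition without proof, importing it directly from \citet{DemaineEFI06}, so the only benchmark is the cited source itself. Your reconstruction --- the metric LP relaxation with triangle inequalities, the region-growing lemma with seed volume $F/n$ and radius $r^* \in [0,1/4)$ giving $\cut(r^*) \le O(\log n)\cdot V(r^*)$, charging cut positive edges to disjoint ball volumes and violated negative edges to the LP cost via $x_{uv} < 1/2$ --- is precisely the argument of \citet{DemaineEFI06} (and of Charikar--Guruswami--Wirth), and it is technically sound as sketched.
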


\begin{proposition}[\citet{Swamy04}]
\label{alg:swamy}
There is a polynomial-time algorithm for MaxAgr on weighted graphs possibly having two parallel edges (one positive and one negative) between each pair of vertices.
This algorithm achieves an approximation ratio of $0.7666$ and
always produces a clustering into at most 6 clusters.
\end{proposition}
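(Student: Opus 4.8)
The plan is to phrase MaxAgr as a vector program, solve its semidefinite relaxation in polynomial time, and round the solution with a constant number of random hyperplanes. I would assign to each vertex $i$ a unit vector $v_i \in \R^n$, with the intent that vertices placed in the same cluster receive nearly parallel vectors. Writing $y_{ij} = v_i \cdot v_j$ as a continuous relaxation of the indicator that $i$ and $j$ share a cluster, the agreement relaxes to
\[
\sum_{(i,j)\in E^+} w_{ij}\, y_{ij} \;+\; \sum_{(i,j)\in E^-} w_{ij}\,(1 - y_{ij}),
\]
maximized subject to $\|v_i\|^2 = 1$ and $v_i \cdot v_j \ge 0$ for all pairs. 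The nonnegativity constraints confine every pairwise angle to $[0,\pi/2]$, which is what makes the relaxation behave like a genuine clustering; they also let the objective carry a positive and a negative parallel edge on the same pair simultaneously, since those contributions simply add. Standard SDP solvers return a near-optimal solution in polynomial time, and because it is a relaxation its value upper bounds the true optimum $\agr(\C^*,G)$.

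For the rounding I would draw a small constant number $r$ of independent uniform hyperplanes through the origin and declare $i,j$ to lie in the same cluster exactly when their vectors fall on the same side of all $r$ hyperplanes, producing at most $2^r$ clusters. The Goemans--Williamson calculation gives $\Pr[i,j\text{ separated by one hyperplane}] = \theta_{ij}/\pi$ with $\theta_{ij} = \arccos(y_{ij})$, so the expected agreement contributed by a positive edge is $w_{ij}(1-\theta_{ij}/\pi)^{r}$ and by a negative edge is $w_{ij}\bigl(1 - (1-\theta_{ij}/\pi)^{r}\bigr)$. Since the objective decomposes edge by edge, bounding the overall approximation reduces to bounding, uniformly over the feasible range $\theta \in [0,\pi/2]$, the ratio of each expected rounded contribution to its SDP contribution, namely $y=\cos\theta$ for positive edges and $1-y$ for negative edges.

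The heart of the proof, and the step I expect to be hardest, is certifying that this worst-case ratio reaches $0.7666$. A single hyperplane is already insufficient: the relaxation drives negative-edge pairs to $\theta = \pi/2$, where one hyperplane separates them with probability only $1/2$, capping the ratio at $\tfrac12$. Overcoming this requires a more careful rounding that combines boundedly many hyperplanes with a few explicit candidate clusterings and then verifies that the resulting one-variable trigonometric ratios stay above $0.7666$; locating their minima and balancing the positive- and negative-edge bounds is a delicate optimization that fixes the constant. Because all guarantees hold in expectation, selecting the best of the constant-size family of candidate roundings both attains the ratio with high probability after a few repetitions and caps the output at six clusters, yielding the claimed bound.
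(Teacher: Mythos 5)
First, a framing note: the paper does not actually prove this proposition --- it is imported as black-box background from \citet{Swamy04}, with no argument given in the text. So the only fair comparison is with Swamy's original proof, and at the level of architecture your sketch reproduces it faithfully: the SDP relaxation with unit vectors, the constraints $v_i \cdot v_j \ge 0$, the edge-by-edge objective that naturally accommodates a positive and a negative parallel edge on the same pair, the observation that the SDP value dominates $\agr(\C^*,G)$ because a clustering embeds as orthonormal cluster vectors, and random-hyperplane rounding analyzed through the Goemans--Williamson probability $\theta/\pi$ are all the right ingredients.

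However, there is a genuine gap exactly at the step you flag as hardest, and it is not a routine verification. The only rounding you concretely define --- same cluster iff same side of all $r$ hyperplanes --- provably cannot reach $0.7666$ for \emph{any} $r$: at $r=1$, negative edges with $\theta = \pi/2$ cap the ratio at $1/2$, as you note; at $r=2$, the same configuration yields expected agreement $1-(1/2)^2 = 3/4$ against an SDP contribution of $1-\cos(\pi/2) = 1$, capping the guarantee at $0.75$; and at $r \ge 3$ the positive edges collapse instead, e.g.\ at $\theta \approx 0.95$ one gets $(1-\theta/\pi)^3/\cos\theta \approx 0.58$. So no uniform choice of $r$ works, and the constant $0.7666$ is obtained in \citet{Swamy04} only through a specific randomized combination of roundings of different orders together with fixed trivial clusterings, with mixing probabilities tuned by a numerical optimization --- the content your proposal compresses into the phrase ``a delicate optimization that fixes the constant'' without exhibiting the scheme or the verification. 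Relatedly, the six-cluster guarantee does not follow from ``selecting the best of a constant-size family'': with $r$ hyperplanes you get up to $2^r$ clusters, so $r=3$ already allows $8$; the bound holds only because every candidate in Swamy's particular family produces at most $6$ clusters. Since the two facts the present paper actually consumes from this proposition are precisely the constant $0.7666$ and the bound $|\C| \le 6$ (the latter feeds $\eta(|\C'|)$ in Observation~\ref{obs:gen-alg}), the proposal as written establishes neither, even though its scaffolding is the correct one.
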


There is a variant of the problem where, for a given parameter $k\in \mathbb{N}$,
we optimize the MinDis and MaxAgr objectives over all clusterings into at most $k$ clusters.
We denote these variants MinDis$[k]$ and MaxAgr$[k]$ respectively.

\begin{proposition}[\citet{GiotisG06}]
\label{alg:giotis}
For constant $\gamma>0$, there are polynomial-time algorithms
for MinDis$[k]$ and MaxAgr$[k]$ on unweighted complete graphs achieving
approximation ratio of $(1+\gamma)$ and $(1-\gamma)$ respectively.
\end{proposition}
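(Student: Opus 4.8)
The plan is to recast a clustering into at most $k$ clusters as a labeling $\sigma:V\to[k]$ and to build a single random-sampling routine that, for any target $\gamma'>0$, outputs in polynomial time a labeling whose objective is within an \emph{additive} $O(\gamma' n^2)$ of optimal, the running time being $n^{O(1)}$ with the hidden constant depending on $k$ and $\gamma'$. This scheme serves both objectives at once, because on a complete unweighted graph $\agr(\C,G)+\err(\C,G)=\binom{n}{2}$ is fixed, so MinDis$[k]$ and MaxAgr$[k]$ share the same optimal clustering and any additive guarantee for one transfers to the other. The reason sampling helps is that $\agr$ and $\err$ decompose edge-by-edge: given the labels of all vertices other than $v$, the locally best label for $v$ is whichever cluster maximizes the number of incident edges it satisfies, a quantity we can estimate from a small sample of already-labeled vertices.

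Concretely, I would fix an optimal labeling $\sigma^*$ for the analysis, draw a random sample $S$ of size $m=\mathrm{poly}(k/\gamma')$ (crucially \emph{independent of} $n$), and exhaustively enumerate all $k^{m}$ labelings of $S$; for constant $k$ and $\gamma'$ this is a constant number of guesses. For the guess matching $\sigma^*|_S$, label each remaining vertex $v$ by the cluster that maximizes its estimated number of satisfied edges, where the estimate counts positive edges from $v$ into $S\cap(\sigma^*)^{-1}(j)$ plus negative edges from $v$ into $S\setminus(\sigma^*)^{-1}(j)$ and rescales by $n/m$. A Hoeffding bound shows that with $m=\mathrm{poly}(k/\gamma')$ samples, each vertex's per-cluster estimates are all within $\gamma' n$ of the true counts except with probability $\gamma'$; hence a $(1-\gamma')$ fraction of vertices are placed within $2\gamma' n$ of their optimal contribution and the remaining $\gamma'$ fraction cost at most $n$ each, for a total additive loss of $O(\gamma' n^2)$ in expectation. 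Since each vertex's optimal label depends on where the \emph{other} vertices land, I would decouple these dependencies by processing $V$ in a constant number of random batches, labeling each batch from a sample drawn from the others, the standard device in dense-CSP additive approximation schemes. This yields a clustering $\C$ with $\agr(\C,G)\ge \mathrm{OPT}_{\agr}-O(\gamma' n^2)$ and $\err(\C,G)\le \mathrm{OPT}_{\err}+O(\gamma' n^2)$.

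Converting the additive guarantee to the stated ratios is immediate for MaxAgr$[k]$: one of the two trivial clusterings (all vertices together, or all singletons) already agrees with at least half of the $\binom{n}{2}$ edges, so $\mathrm{OPT}_{\agr}=\Omega(n^2)$, and taking $\gamma'=c\gamma$ makes the additive slack at most $\gamma\cdot\mathrm{OPT}_{\agr}$, giving the $(1-\gamma)$ ratio. The main obstacle is MinDis$[k]$, where $\mathrm{OPT}_{\err}$ can be $0$ on perfectly clusterable instances, so an additive bound does not by itself give a multiplicative one. I would split on the value of $\mathrm{OPT}_{\err}$, guessed geometrically: when $\mathrm{OPT}_{\err}\ge \gamma' n^2/\gamma$ the additive guarantee already gives $\err(\C,G)\le(1+\gamma)\mathrm{OPT}_{\err}$, so all the difficulty lives in the near-clusterable regime $\mathrm{OPT}_{\err}=o(n^2)$. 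There the optimal clusters are large and well separated, so a constant-size sample should classify all but a few vertices exactly; the crux is to show that every misclassified vertex has a small estimation gap, which forces it to already be involved in many disagreements under $\sigma^*$, so that its extra cost can be charged against $\mathrm{OPT}_{\err}$ rather than against $n^2$. Making this charging tight enough to reach a $(1+\gamma)$ factor — rather than the much weaker bound that naive per-vertex accounting produces — is the delicate step and is where I expect the real work to lie; by contrast the dense-sampling machinery behind the additive guarantee is comparatively routine.
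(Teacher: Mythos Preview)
The paper does not prove this proposition at all: it is quoted as an external result of \citet{GiotisG06} and used as a black box later, so there is no ``paper's own proof'' to compare against. Your sketch is therefore not competing with anything in this paper.

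As to whether your sketch would stand on its own as a proof of the cited result: the MaxAgr$[k]$ part is essentially complete and matches the standard dense-sampling/additive-error argument, including the observation that $\mathrm{OPT}_{\agr}=\Omega(n^2)$ so an additive $O(\gamma' n^2)$ loss becomes a $(1-\gamma)$ ratio. For MinDis$[k]$, however, you have only identified the difficulty rather than resolved it. You correctly note that the near-clusterable regime $\mathrm{OPT}_{\err}=o(n^2)$ is where the work lies and gesture at a charging argument, but the actual Giotis--Guruswami PTAS for MinDis$[k]$ is considerably more involved than ``misclassified vertices already have many disagreements under $\sigma^*$'': it proceeds in $O(k/\gamma)$ phases, in each phase using a fresh sample to place a constant fraction of the remaining vertices, and crucially exploits structural lemmas about how close the optimal clusters are to cliques when $\mathrm{OPT}_{\err}$ is small. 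Your single-shot sampling with a vague charging step would not recover a $(1+\gamma)$ factor; at best it gives a constant-factor approximation in the low-disagreement regime. So the proposal is a correct outline for half of the statement and an honest but unfinished plan for the other half.
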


\begin{proposition}[\citet{Swamy04}]
\label{alg:swamy2}
There is a polynomial-time algorithm for MaxAgr$[k]$ on general weighted graphs
with approximation ratio $0.7666$.
\end{proposition}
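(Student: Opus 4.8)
The plan is to prove this via a semidefinite programming (SDP) relaxation followed by a randomized rounding procedure, in the spirit of Goemans--Williamson for MAX-CUT. First I would set up the SDP relaxation of MaxAgr$[k]$. To each vertex $i$ I associate a unit vector $v_i \in \R^n$, with the intended integral semantics that $v_i$ equals one of $k$ fixed orthonormal directions $e_1,\dots,e_k$ according to the cluster of $i$; under this interpretation $\langle v_i,v_j\rangle$ is exactly the indicator that $i$ and $j$ lie in the same cluster. The SDP then maximizes
$$\sum_{(i,j)\in E^+} w_{ij}\,\langle v_i,v_j\rangle \;+\; \sum_{(i,j)\in E^-} w_{ij}\,\bigl(1-\langle v_i,v_j\rangle\bigr)$$
subject to $\langle v_i,v_i\rangle = 1$ together with the valid constraints $\langle v_i,v_j\rangle \ge 0$ (and triangle-type inequalities), all of which hold for every integral solution. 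Since every clustering into at most $k$ parts is feasible for this relaxation, the optimum SDP value is at least $\agr(\C^*,G)$, and the SDP can be solved to arbitrary accuracy in polynomial time.

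The second step is the rounding. The core primitive is random-hyperplane rounding: drawing $t$ independent random hyperplanes through the origin partitions the vectors $v_i$ into at most $2^t$ groups, and taking $t = \lfloor \log_2 k\rfloor$ guarantees at most $k$ clusters. For a single hyperplane, two vectors at angle $\theta$ are separated with probability $\theta/\pi$, so a positive edge is satisfied with probability $1-\theta/\pi$ and a negative edge with probability $\theta/\pi$. Comparing these to the SDP contributions $\cos\theta$ and $1-\cos\theta$ shows that pure hyperplane rounding has a worst-case per-edge ratio of only $1/2$ (attained near $\theta=\pi/2$ on negative edges). To push the guarantee up to $0.7666$, I would output a randomized mixture: with probability $p$ run hyperplane rounding, and with probability $1-p$ place all vertices in one cluster (which satisfies every positive edge perfectly). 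Both candidates use at most $k$ clusters, so the mixture does as well.

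The heart of the argument is the rounding analysis. For a fixed SDP inner product $x=\langle v_i,v_j\rangle = \cos\theta \in [0,1]$, the mixed rounding satisfies a positive edge with probability $f(x) = p\,(1-\theta/\pi) + (1-p)$ and a negative edge with probability $g(x) = p\,(\theta/\pi)$. The expected agreement of the rounded clustering is then at least $\rho$ times the SDP value, where
$$\rho = \min_{x\in[0,1]} \min\!\left\{\frac{f(x)}{x},\ \frac{g(x)}{1-x}\right\}.$$
I would choose the mixing probability $p$ (and, if needed, refine the shape of the rounding curve as in Charikar--Guruswami--Wirth and Swamy) so as to maximize this minimum, verifying analytically that the optimal choice yields $\rho \ge 0.7666$. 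Chaining the two inequalities gives $\agr(\C,G) \ge \rho\cdot(\text{SDP value}) \ge 0.7666\,\agr(\C^*,G)$ in expectation, and derandomization by the method of conditional expectations (or simply repeating and keeping the best clustering) makes the algorithm deterministic and polynomial time.

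The main obstacle is the tight rounding analysis: the clean single-hyperplane curve is insufficient, and obtaining the sharp constant $0.7666$ requires optimizing the mixture so that the worst case over all angles $\theta$ is simultaneously balanced between the positive-edge and negative-edge ratios. A secondary subtlety, specific to the bounded-cluster version, is enforcing the at-most-$k$ constraint while retaining this ratio: the all-separate solution that helps in the unconstrained MaxAgr problem produces $n$ clusters and is therefore unavailable, so the analysis must reach $0.7666$ using only roundings that respect the cluster budget.
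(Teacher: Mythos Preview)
First, note that the paper does not prove this proposition; it is quoted from \citet{Swamy04} without proof, so there is no argument in the paper to compare yours against directly.

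That said, your proposed argument has a genuine gap that cannot be repaired by tuning the mixture or ``refining the rounding curve.'' The problem is the SDP itself. By embedding the $k$ cluster labels as orthonormal vectors $e_1,\dots,e_k$ and enforcing $\langle v_i,v_j\rangle\ge 0$, you permit pairwise orthogonal SDP solutions, and these already create an integrality gap that rules out the ratio $0.7666$ for small $k$. Take $k=2$ and a triangle on three vertices with all three edges negative of unit weight. Any $2$-clustering satisfies at most two of the three edges, so $\agr(\C^*,G)=2$. But placing $v_1,v_2,v_3$ at three mutually orthogonal unit vectors is feasible for your SDP and gives objective $\sum_{i<j}(1-\langle v_i,v_j\rangle)=3$. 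Hence the integrality gap of your relaxation for MaxAgr$[2]$ is at least $3/2$, and \emph{no} rounding of it---hyperplanes, mixtures, or otherwise---can guarantee a ratio better than $2/3<0.7666$. There is also a smaller inconsistency: you partition with $t=\lfloor\log_2 k\rfloor$ hyperplanes to respect the cluster budget but then analyze with the single-hyperplane separation probability $\theta/\pi$ rather than $1-(1-\theta/\pi)^t$.

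Swamy's actual algorithm avoids this by using the simplex embedding rather than the orthonormal one: the $k$ labels are the vertices of a regular $(k{-}1)$-simplex, so integral inner products lie in $\{1,\,-\tfrac{1}{k-1}\}$, the SDP constraint is $\langle v_i,v_j\rangle\ge -\tfrac{1}{k-1}$, and the same-cluster indicator is $\tfrac{(k-1)\langle v_i,v_j\rangle+1}{k}$. Rounding is done Frieze--Jerrum style (draw $k$ independent Gaussians $r_1,\dots,r_k$ and assign vertex $i$ to $\arg\max_j\langle v_i,r_j\rangle$), combined with the all-in-one clustering; this is what delivers $0.7666$ while never producing more than $k$ parts. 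The wider angle range allowed by the simplex embedding is precisely what makes the negative-edge side of the analysis go through.
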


For general and weighted graphs,
\citet{GiotisG06} propose an $O(\sqrt{\log n})$-approximation for
MinDis$[2]$ and show that MinDis$[k]$ is inaproximable for $k>2$.

\shortonly{\vspace{-2mm}}
\subsection{Differential privacy}
Differential privacy was first defined by \citet{DworkMNS06}. We refer the reader to  \citet{DworkR14} for a textbook treatment.

\begin{definition}[Neighboring graphs]
Let $G,G'$ be two weighted graphs on the same vertex set $V$ with weights
$w, w' \in \R^{\binom{V}{2}}$ and sign labels $\sigma,\sigma' \in \{-1,+1\}^{\binom{V}2}$.
We say that $G$ and $G'$ are {\em neighboring}, if
\[ \sum_{e\in\binom{V}{2}} |\sigma_e w_e - \sigma'_e w'_e| \leq 2. \]
\end{definition}
\shortonly{\vspace{-2mm}}
This is equivalent to switching the sign of a single edge in an unweighted graph.
In weighted graphs, an edge with a different label in $G$ and $G'$
may contribute only a small amount to the total difference, if both
labels were acquired using measurements with a low confidence
(i.e., $w_e$ and $w'_e$ are small).

\begin{definition}[Differential privacy]
Let $\ALG$ be a randomized algorithm whose domain is the set of all weighted graphs
with edges labeled by $\pm1$.
Let $\mu_G$ denote the distribution over possible outputs
of $\ALG$ given input graph $G$.
We say that $\ALG$ is {\em $(\epsilon,\delta)$-differentially private}, if
the following holds:
For any measurable $S \subseteq Range(\ALG)$ and any pair of neighboring
graphs $G$ and $G'$, we have
\[ \mu_G(S) \leq \exp(\epsilon) \mu_{G'}(S) + \delta. \]
If $\ALG$ fulfils this definition with $\delta=0$, we call it
{\em $\epsilon$-differentially private}.
\end{definition}

In other words, the output distributions of $\ALG$ on two neighboring graphs
are very similar. This implies that the output distributions are very similar
for any pair of graphs which are relatively close to each other, as
shown in the following proposition.

\begin{proposition}[Group privacy]
\label{prop:group_priv}
Let $\ALG$ be an $\epsilon$-differentially private algorithm.
Then, for any $G$ and $G'$ with distance $k$, i.e., such that
\[ \sum_{e\in\binom{V}{2}} |\sigma_e w_e - \sigma'_e w'_e| \leq 2k, \]
we have
\[ \mu_G(S) \leq \exp(k\epsilon) \mu_{G'}(S) \]
for any measurable $S \subseteq Range(\ALG)$.
\end{proposition}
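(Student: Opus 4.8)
The plan is to reduce group privacy to $k$ successive applications of the single-edge (neighboring-graph) guarantee, by interpolating a chain of graphs $G = G_0, G_1, \dots, G_k = G'$ in which each consecutive pair is neighboring. The key observation is that, since weights are real-valued and the distance between two graphs is the $\ell_1$ distance between their signed-weight vectors $x := (\sigma_e w_e)_e$ and $x' := (\sigma'_e w'_e)_e$, one can deform $G$ into $G'$ along the straight segment joining $x$ and $x'$ and cut this deformation into $k$ equal pieces, each of which is a legal single step between neighbors.

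Concretely, I would set $x^{(i)} := x + \tfrac{i}{k}(x' - x)$ for $i = 0,1,\dots,k$, and let $G_i$ be the graph whose edge $e$ carries signed weight $x^{(i)}_e$; one recovers a nonnegative weight $w_e = |x^{(i)}_e|$ and a label $\sigma_e = \operatorname{sign}(x^{(i)}_e)$ (breaking ties arbitrarily when $x^{(i)}_e = 0$), so each $G_i$ is a valid weighted, labeled graph. Then $G_0 = G$ and $G_k = G'$, and the quantity to verify is that consecutive graphs are neighboring:
\[
\sum_{e\in\binom{V}{2}} \left| x^{(i)}_e - x^{(i+1)}_e \right| = \frac{1}{k}\sum_{e\in\binom{V}{2}} |x_e - x'_e| \leq \frac{2k}{k} = 2.
\]

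With the chain in place, I would apply the definition of $\epsilon$-differential privacy (with $\delta = 0$) to each neighboring pair to obtain $\mu_{G_i}(S) \le e^{\epsilon}\,\mu_{G_{i+1}}(S)$, and then telescope these $k$ inequalities into
\[
\mu_G(S) = \mu_{G_0}(S) \le e^{k\epsilon}\,\mu_{G_k}(S) = e^{k\epsilon}\,\mu_{G'}(S),
\]
which is exactly the claim.

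I expect essentially no real obstacle here; the only points requiring a moment of care are (i) that every interpolated vector $x^{(i)}$ indeed corresponds to a genuine weighted graph with $\pm 1$ labels, which is handled by the weight/sign decomposition above, and (ii) that $k$ is a positive integer so that the subdivision into $k$ equal steps is exact (for a non-integer distance one would instead take $\lceil k\rceil$ steps and obtain the slightly weaker factor $e^{\lceil k\rceil\epsilon}$). The one genuinely conceptual point, rather than an obstacle, is recognizing that the weighted $\ell_1$ neighboring relation admits this clean linear interpolation, which is precisely what lets the single-edge guarantee compose so cleanly.
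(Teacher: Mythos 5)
Your proof is correct: the paper states this proposition without proof (it is the standard group-privacy fact), and your argument --- linearly interpolating the signed-weight vectors into $k$ equal $\ell_1$-steps of size at most $2$, so that consecutive graphs are neighboring, and then telescoping the $\epsilon$-DP inequality --- is exactly the canonical hybrid argument the paper implicitly relies on. You also correctly handle the one subtlety specific to this paper's weighted setting, namely that each interpolated vector must be re-decomposed into a non-negative weight $|x^{(i)}_e|$ and a $\pm 1$ label so that every $G_i$ is a legitimate input graph.
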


An important property of differential privacy is robustness to post-processing, i.e., applying a function which does not have access to the
private data cannot make the output of $\ALG$
less differentially private.

\begin{proposition}[Post-processing]
\label{prop:post}
Let $\ALG$ be an $(\epsilon,\delta)$-differentially private algorithm
and let $f$ be an arbitrary randomized function whose domain is $Range(\ALG)$.
Then, the composition $f \circ \ALG$
is $(\epsilon,\delta)$-differentially private.
\end{proposition}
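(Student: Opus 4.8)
The plan is to reduce the randomized case to the deterministic one by conditioning on the internal randomness of $f$. First I would treat the case where $f$ is deterministic. Fix a pair of neighboring graphs $G, G'$ and a measurable set $S \subseteq Range(f \circ \ALG)$. Since $f$ is deterministic, the preimage $f^{-1}(S)$ is a measurable subset of $Range(\ALG)$, and by the definition of composition the event $(f \circ \ALG)(G) \in S$ coincides with $\ALG(G) \in f^{-1}(S)$. Applying the $(\epsilon,\delta)$-privacy guarantee of $\ALG$ to the measurable event $f^{-1}(S)$ gives
\[ \mu_G(f^{-1}(S)) \le e^{\epsilon}\, \mu_{G'}(f^{-1}(S)) + \delta, \]
which is exactly the desired bound for $f \circ \ALG$.

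For the general case I would model the randomized function by a deterministic map together with an independent source of randomness: write $f = f_r$, where $r$ is drawn from a distribution $\rho$ that is independent of the randomness of $\ALG$, and each $f_r$ is deterministic. For every fixed $r$, the previous paragraph applied to the deterministic map $f_r$ yields $\mu_G(f_r^{-1}(S)) \le e^{\epsilon}\mu_{G'}(f_r^{-1}(S)) + \delta$. Since $r$ is independent of $\ALG$, the probability that $f \circ \ALG$ lands in $S$ on input $G$ equals $\E_{r \sim \rho}\big[\mu_G(f_r^{-1}(S))\big]$. Taking the expectation over $r \sim \rho$ of the pointwise inequality and using linearity gives
\[ \E_{r \sim \rho}\big[\mu_G(f_r^{-1}(S))\big] \le e^{\epsilon}\, \E_{r \sim \rho}\big[\mu_{G'}(f_r^{-1}(S))\big] + \delta, \]
which is precisely the $(\epsilon,\delta)$-privacy guarantee for $f \circ \ALG$.

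The only genuine subtlety, and the step I would be most careful about, is the measure-theoretic bookkeeping: one must verify that $f_r^{-1}(S)$ is measurable for (almost) every $r$ and that $r \mapsto \mu_G(f_r^{-1}(S))$ is integrable, so that interchanging the expectation over $r$ with the probability over $\ALG$ (a Fubini-type step) is justified. Representing an arbitrary randomized $f$ as the family $\{f_r\}_{r \sim \rho}$ with input-independent $r$ is standard but worth stating explicitly, since this independence is exactly what licenses conditioning on $r$ and applying the deterministic bound inside the expectation. Granting that representation, the argument is just an averaging of the deterministic case over $r$.
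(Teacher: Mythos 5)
Your proof is correct: the reduction of a randomized $f$ to a $\rho$-mixture of deterministic maps $f_r$, combined with the preimage argument $\{(f\circ\ALG)(G)\in S\} = \{\ALG(G)\in f_r^{-1}(S)\}$ and averaging the $(\epsilon,\delta)$ bound over $r$ (the $\delta$ term survives intact since it is constant in $r$), is exactly the canonical argument, e.g., Proposition 2.1 of \citet{DworkR14}. The paper itself states this proposition without proof, treating it as a standard property of differential privacy, so there is no in-paper argument to compare against; your write-up, including the care about measurability of $f_r^{-1}(S)$ and the Fubini step, is a complete and faithful rendering of the textbook proof.
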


\section{Linear lower bound for paths}
\label{sec:lb}
In this section, we prove a lower bound on the additive error of $\epsilon$-DP algorithms for clustering paths.
Let $\sigma \in \{-1,+1\}^n$ be a sign vector and
$P_n(\sigma)$ denote a path on $n+1$ vertices $v_0, \dotsc, v_n$
with $n$ edges, such that the label of the edge $v_{i-1}v_i$
is $\sigma_i$ for $i=1,\dotsc, n$.
We use the following simple fact.

\begin{lemma}\label{lem:path}
Let $\sigma, \sigma' \in \{-1,+1\}^n$ be two sign vectors. The following hold:
\shortonly{\vspace{-2mm}}
\begin{enumerate}
\item For any sign vector $\sigma \in \{-1,+1\}^n$, there is an optimal clustering of $P_n(\sigma)$ with error 0.
\item If $\sigma$ and $\sigma'$ differ in at least $d$ coordinates,
no clustering can have less than $d/2$ disagreements
on both $P_n(\sigma)$ and $P_n(\sigma')$.
\end{enumerate}
\end{lemma}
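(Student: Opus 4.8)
The plan is to prove the two parts separately, with the first being a direct construction and the second reducing to the triangle inequality for Hamming distance.

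For the first part, I would exhibit an explicit zero-cost clustering; since $\err(\C, P_n(\sigma)) \ge 0$ always, any clustering with error $0$ is automatically optimal, so producing one suffices. The natural choice is to cluster the vertices according to the connected components of the subgraph consisting only of the positive edges: place $v_{i-1}$ and $v_i$ in the same cluster precisely when they are joined by a path of positive edges. Because $P_n(\sigma)$ is a path (hence a tree), the only route between two adjacent vertices is the single edge joining them, so two adjacent vertices share a cluster if and only if the edge between them is positive. Consequently every positive edge lies inside a cluster and every negative edge crosses clusters, so this clustering agrees with all edges and has error $0$.

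For the second part, I would encode an arbitrary clustering $\C$ by the sign vector recording its cut pattern along the path. Define $a \in \{-1,+1\}^n$ by setting $a_i = +1$ when $\C$ places $v_{i-1}$ and $v_i$ in the same cluster and $a_i = -1$ otherwise. A positive edge disagrees with $\C$ exactly when its endpoints are separated, and a negative edge disagrees exactly when its endpoints are merged; in both cases edge $i$ disagrees if and only if $\sigma_i \ne a_i$. Hence the disagreement equals a Hamming distance,
\[ \err(\C, P_n(\sigma)) = |\{\, i : \sigma_i \ne a_i \,\}| = d_H(\sigma, a), \]
and likewise $\err(\C, P_n(\sigma')) = d_H(\sigma', a)$, where $d_H$ denotes Hamming distance. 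The result then follows from the triangle inequality:
\[ d \le d_H(\sigma, \sigma') \le d_H(\sigma, a) + d_H(a, \sigma') = \err(\C, P_n(\sigma)) + \err(\C, P_n(\sigma')). \]
If both errors were strictly below $d/2$, their sum would be strictly below $d$, contradicting the displayed inequality, so at least one of them is at least $d/2$, which is exactly the claim.

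The individual steps here are elementary, so there is no serious technical obstacle; the one point requiring care is justifying that the disagreement of a clustering on a path is completely determined by its cut pattern $a$, i.e.\ that the clustering interacts with the path only through which adjacent pairs it separates. Once this is made precise and $\err(\cdot, P_n(\cdot))$ is identified with a Hamming distance, the triangle inequality finishes the argument at once. This identification is also the conceptual payoff that the subsequent lower bound will exploit: any single algorithm output corresponds to one vector $a$, and a family of sign vectors that are pairwise far in Hamming distance forces that output to incur large error on all but one of the corresponding path instances.
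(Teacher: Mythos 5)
Your proposal is correct and matches the paper's argument in substance: part~1 uses the same construction (clusters are the connected components of the positive-edge subgraph), and your Hamming-distance-plus-triangle-inequality argument for part~2 is the paper's pigeonhole argument in different clothing --- for each edge where $\sigma$ and $\sigma'$ differ, the cut pattern $a$ must disagree with at least one of them, which is exactly the coordinatewise content of $d_H(\sigma,\sigma') \le d_H(\sigma,a) + d_H(a,\sigma')$. Your explicit identification $\err(\C, P_n(\sigma)) = d_H(\sigma, a)$ is a clean way to package it and correctly anticipates how the packing lower bound will use the lemma, but it is not a genuinely different proof.
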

\begin{proof}
To show the first statement, consider a clustering $\C$ whose clusters are
formed by vertices adjacent to sequences of positive edges in the path, as in
Figure~\ref{fig:path-opt}.
Then the endpoints of any positive edge belong to the same cluster
while endpoints of any negative edge belong to different clusters,
implying that $\err(\C, P_n(\sigma)) = 0$.

To show the second statement, let $D$ denote the set of edges with different
sign in $\sigma$ and $\sigma'$ and let $\C$ be an arbitrary clustering.
For any edge $e\in D$, the endpoints of $e$ either belong to the same
cluster in $\C$ or belong to different ones.
Since $\sigma(e) \neq \sigma'(e)$, $\C$ disagrees with $e$ either in
$P_n(\sigma)$ or $P_n(\sigma')$. By the pigeonhole principle,
$\C$ has at least $|D|/2$ disagreements with either $P_n(\sigma)$ and $P_n(\sigma')$.
\end{proof}

\begin{figure}
    \centering
    \includegraphics[width=\linewidth]{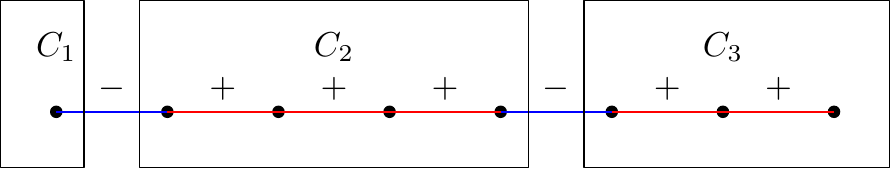}
    \caption{Optimal clustering of a path}
    \label{fig:path-opt}
\end{figure}

\paragraph{Asymptotically good codes.}
We use the following terminology from coding theory.
Let $A \subseteq \{0,1\}^n$ be a code consisting of $M$ codewords
of length $n$
and let $\alpha, \beta \in [0,1]$ be constants.
We say that $A$ has rate $\alpha$ and minimum relative distance $\beta$
if $M = 2^{\alpha n}$ and every pair of distinct codewords $c,c' \in A$ differ in
at least $\beta n$ coordinates.
Consider a family of codes $\A = \{A_i | i \in \mathbb{N}\}$,
where $A_i$ has length $n_i$, for $n_i \geq n_{i-1}$,
rate $\alpha_i$, and minimum relative distance $\beta_i$.
We say that $\A$ is \emph{asymptotically good} if its rate
$R(\A) = \lim\inf_{i} \alpha_i$ and
its minimum relative distance $d(\A) = \lim\inf_i \beta_i$ are both
strictly positive.
The following theorem proves the existence of such code families.

\begin{proposition}[Asymptotic Gilbert-Varshamov bound]
For any $\beta \in [0,1/2)$, there is an infinite family $\A$
of codes with minimum relative distance $\beta$
with rate
\[ R(\A) \geq 1 - h(\beta) - o(1), \]
where $h(\beta) = \beta \log_2 \frac1\beta + (1-\beta)\log_2 \frac1{1-\beta}$
is a constant smaller than 1 for the given $\beta$.
\end{proposition}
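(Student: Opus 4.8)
The plan is to establish the bound for each fixed length by a greedy volume-covering argument and then assemble the resulting codes into an infinite family. Fix a length $n$ and set the target distance to $d = \lceil \beta n \rceil$. I would construct a code $A \subseteq \{0,1\}^n$ greedily: start with $A = \emptyset$, and repeatedly add to $A$ any word $x \in \{0,1\}^n$ whose Hamming distance to \emph{every} word already in $A$ is at least $d$, stopping when no such word remains. By construction any two distinct codewords differ in at least $d \geq \beta n$ coordinates, so $A$ has minimum relative distance at least $\beta$.

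The key step is to lower bound the number $M = |A|$ of codewords at termination. When the process stops, no word lies at distance $\geq d$ from all of $A$; equivalently, every word of $\{0,1\}^n$ lies within Hamming distance $d-1$ of some codeword. Hence the Hamming balls of radius $d-1$ centered at the codewords cover the entire cube, which gives $M \cdot V(n, d-1) \geq 2^n$, where $V(n,r) = \sum_{i=0}^{r} \binom{n}{i}$ is the volume of a radius-$r$ ball. Therefore $M \geq 2^n / V(n, d-1)$.

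It then remains to bound the ball volume by the entropy function. Here I would invoke the standard estimate $V(n, \lfloor \beta n \rfloor) \leq 2^{h(\beta) n}$, which holds precisely because $\beta < 1/2$ (so the binomial coefficients increase up to the truncation point and the truncated sum is controlled by $2^{h(\beta)n}$; one proves this by expanding $1 = (\beta + (1-\beta))^n$ and keeping the dominant terms, or via Stirling's approximation). Combining with the covering bound yields
\[ M \geq 2^{\,n - h(\beta) n - o(n)} = 2^{\,(1 - h(\beta) - o(1))\,n}, \]
so the length-$n$ code $A$ has rate $\alpha(n) \geq 1 - h(\beta) - o(1)$.

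Finally, to obtain an \emph{infinite family} I would run this construction along an increasing sequence of lengths $n_1 < n_2 < \cdots$, producing codes $A_i$ of length $n_i$, each of minimum relative distance $\beta_i = \lceil \beta n_i\rceil / n_i \in [\beta, \beta + 1/n_i]$ and rate $\alpha_i \geq 1 - h(\beta) - o_i(1)$ with $o_i(1) \to 0$. For the family $\A = \{A_i\}$ we then get $d(\A) = \liminf_i \beta_i = \beta$ and $R(\A) = \liminf_i \alpha_i \geq 1 - h(\beta)$, which is strictly positive since $h(\beta) < 1$ for every $\beta \in [0,1/2)$; this is exactly the claimed bound, and it certifies that the family is asymptotically good. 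The only substantive piece of the argument — and the single place where the hypothesis $\beta < 1/2$ is essential — is the entropy estimate for the Hamming-ball volume; the rest is a direct greedy construction followed by a pigeonhole/covering count.
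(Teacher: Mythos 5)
Your proof is correct. The paper itself offers no proof of this proposition --- it simply cites \citet{AlonBNNR92} for a construction --- so there is nothing to diverge from; your argument is the standard greedy (maximal-code) Gilbert--Varshamov proof: maximality forces the radius-$(d-1)$ Hamming balls around the codewords to cover $\{0,1\}^n$, the covering count gives $M \geq 2^n / V(n,d-1)$, and the entropy estimate $V(n,\lfloor \beta n\rfloor) \leq 2^{h(\beta)n}$ (valid exactly because $\beta < 1/2$) converts this into the claimed rate. All steps check out, including the passage to an infinite family via a sequence of lengths and the verification that $\liminf$ of the rates is at least $1-h(\beta) > 0$. One small remark: since the volume bound $V(n,\lfloor \beta n\rfloor)\leq 2^{h(\beta)n}$ holds with no error term, the greedy argument actually gives rate at least $1-h(\beta)$ at every length, so the $o(1)$ loss in your display is unnecessary (though of course harmless, as the proposition only claims the weaker bound). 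Note also that the proposition as used in the paper (Corollary~\ref{cor:agc}) only needs existence, which your construction delivers; the cited reference is concerned with explicit/efficient constructions, which your greedy procedure does not provide, but nothing in the paper's lower-bound argument requires efficiency of the code.
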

See, e.g., \citep{AlonBNNR92} for a construction of such codes.

\begin{corollary}\label{cor:agc}
Given a constant $\beta \in [0,1/2)$ and $n$ large enough,
there is a binary code whose codewords have pairwise distance at least
$\beta n$ of size larger than $2^{\alpha n}$ for some constant $\alpha$ depending
only on $\beta$.
In particular, for $\beta=0.1$, we can choose $\alpha=0.4$.
\end{corollary}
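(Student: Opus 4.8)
The plan is to derive the corollary directly from the preceding Asymptotic Gilbert--Varshamov bound, which is the only nontrivial ingredient; everything else is bookkeeping. First I would invoke that proposition for the given constant $\beta \in [0,1/2)$, obtaining an infinite family $\A$ of codes with minimum relative distance $\beta$ and rate $R(\A) = \liminf_i \alpha_i \geq 1 - h(\beta) - o(1)$. The crucial observation is that $\beta < 1/2$ forces $h(\beta) < 1$, since the binary entropy $h$ is strictly increasing on $[0,1/2]$ and attains the value $1$ only at $1/2$. Hence $1 - h(\beta)$ is a strictly positive constant depending only on $\beta$, which is what will play the role of $\alpha$ up to a small slack.

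Next I would fix any constant $\alpha$ with $0 < \alpha < 1 - h(\beta)$ and translate the asymptotic rate bound into a concrete size guarantee. Because $\liminf_i \alpha_i \geq 1 - h(\beta) > \alpha$, for all sufficiently large indices $i$ we have $\alpha_i > \alpha$, so the code $A_i$ of length $n_i$ has $|A_i| = 2^{\alpha_i n_i} > 2^{\alpha n_i}$ codewords, with pairwise distance at least $\beta n_i$ by construction. This is exactly the claimed bound at the lengths $n_i$. For the concrete instance $\beta = 0.1$ I would simply evaluate $h(0.1) = 0.1 \log_2 10 + 0.9 \log_2 \tfrac{10}{9} \approx 0.469$, so that $1 - h(0.1) \approx 0.531 > 0.4$; this confirms that $\alpha = 0.4$ is an admissible choice.

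The one point requiring care is the quantifier ``for $n$ large enough'': as literally stated, the proposition produces codes only at the sequence of lengths $n_i$, whereas the corollary asks for a code of \emph{every} sufficiently large length. I would resolve this by appealing to the standard non-asymptotic Gilbert--Varshamov construction, which for every block length $n$ and target distance $d = \lceil \beta n \rceil$ yields a code of size at least $2^n / V(n, d-1)$, where $V(n,r)$ denotes the Hamming ball volume; the estimate $V(n, \beta n) \leq 2^{h(\beta) n}$ valid for $\beta \leq 1/2$ then gives rate $1 - h(\beta) - o(1)$ at each individual length. Alternatively, one may note that the subsequent lower-bound argument only ever needs codes at infinitely many large lengths, in which case the family version already suffices and the subtlety disappears. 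I expect this quantifier bookkeeping, rather than any genuine mathematical difficulty, to be the only obstacle.
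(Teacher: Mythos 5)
Your proposal is correct and follows essentially the same route as the paper, which states the corollary as a direct consequence of the Asymptotic Gilbert--Varshamov bound (with a citation in place of a written proof): the rate $1-h(\beta)-o(1)$ is positive for $\beta<1/2$, and $h(0.1)\approx 0.469$ gives $1-h(0.1)\approx 0.531>0.4$, so $\alpha=0.4$ is admissible. Your extra care about the quantifier over lengths is sound but not a departure --- as you note, the packing argument in Section~\ref{sec:lb} only needs a code at some sufficiently large $n$, so the family version already suffices.
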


\shortonly{\vspace{-4mm}}
\paragraph{Lower bound construction.}
\begin{theorem}
Let $\epsilon > 0$ be a constant and $\ALG$ be a fixed
$\epsilon$-DP algorithm for MinDis.
Then the expected additive error of $\ALG$ on weighted paths
is $\Omega(n/\epsilon)$.
If $\epsilon \leq 0.2$, then its expected error is $\Omega(n)$
already on unweighted paths.
\end{theorem}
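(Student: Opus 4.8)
The plan is to run a packing argument against $\ALG$, using the asymptotically good code from Corollary~\ref{cor:agc} to manufacture exponentially many path instances that are mutually ``confusable'' for any clustering. Fix $\beta = 0.1$ and let $A \subseteq \{0,1\}^n$ be the code of Corollary~\ref{cor:agc}, of size $M = 2^{0.4n}$ with pairwise Hamming distance at least $0.1 n$. I would identify each codeword $c$ with the sign vector $\sigma^c \in \{-1,+1\}^n$ (say $\sigma^c_i = 1 - 2c_i$) and, for a weight parameter $w > 0$ to be chosen later, let $G_c$ be the path $P_n(\sigma^c)$ with every edge weighted by $w$. By the first part of Lemma~\ref{lem:path} each $G_c$ has optimal MinDis cost $0$, so showing $\ALG$ incurs large error on some $G_c$ immediately yields a large \emph{additive} error. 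The second part of Lemma~\ref{lem:path} supplies the crucial separation: for $c \neq c'$ and any clustering $\C$,
\[ \err(\C, G_c) + \err(\C, G_{c'}) \geq w \cdot 0.1 n, \]
since the two instances differ on at least $0.1n$ edges, each of weight $w$.

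Next I would define, for each codeword $c$, the ``good'' event $S_c = \{\C : \err(\C, G_c) < w\cdot 0.1n / 2\}$. The separation above forces the sets $\{S_c\}_{c \in A}$ to be pairwise disjoint, because a clustering in $S_c \cap S_{c'}$ would have combined error strictly below $w \cdot 0.1n$, contradicting the displayed bound. Suppose toward a contradiction that $\ALG$ has expected additive error below $w \cdot 0.1n/4$ on every instance. Then Markov's inequality gives $\mu_{G_c}(S_c) \geq 1/2$ for each $c$. Fixing one base instance $G_{c_0}$, the Hamming (hence graph) distance from $G_{c_0}$ to any $G_c$ is at most $wn$, so group privacy (Proposition~\ref{prop:group_priv}) yields $\mu_{G_{c_0}}(S_c) \geq e^{-\epsilon w n}\,\mu_{G_c}(S_c) \geq \tfrac12 e^{-\epsilon w n}$. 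Summing over all $M$ codewords and using disjointness,
\[ 1 \geq \sum_{c \in A} \mu_{G_{c_0}}(S_c) \geq \tfrac{M}{2}\, e^{-\epsilon w n} = \tfrac12\, 2^{0.4 n} e^{-\epsilon w n}. \]

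The whole argument now reduces to choosing $w$ so that the right-hand side exceeds $1$, i.e. so that $0.4 n \ln 2 > \epsilon w n$; this balancing is the only real decision point. For \emph{unweighted} paths I would take $w = 1$, and the condition becomes $\epsilon < 0.4\ln 2 \approx 0.277$, which is implied by $\epsilon \leq 0.2$; the resulting contradiction shows the expected error must be at least $0.1n/4 = \Omega(n)$. For \emph{weighted} paths and an arbitrary constant $\epsilon$, I would instead set $w = \Theta(1/\epsilon)$ (concretely any $w \leq 0.2\ln 2 / \epsilon$), which keeps the exponent favorable for $n$ large and forces expected error at least $w\cdot 0.1n/4 = \Omega(n/\epsilon)$. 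I expect the main obstacle to be purely this quantitative balance: the instances can lie at Hamming distance as large as $n$, so the $e^{-\epsilon w n}$ group-privacy loss beats the $2^{0.4n}$ packing gain only when $\epsilon w$ is a small enough constant. The weight-scaling trick is exactly what converts an $\epsilon$-dependent feasibility constraint into the advertised $\Omega(n/\epsilon)$ error scale, at the cost of a linear-in-$w$ inflation of the achievable error.
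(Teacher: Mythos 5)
Your proposal is correct and follows essentially the same route as the paper: the same Gilbert--Varshamov code with $\beta=0.1$ and size $2^{0.4n}$, the same disjoint ``good-output'' sets via Lemma~\ref{lem:path}, the same group-privacy packing argument with worst-case distance $wn$, and the same weight-scaling $w=\Theta(1/\epsilon)$ to upgrade the unweighted $\Omega(n)$ bound to $\Omega(n/\epsilon)$ on weighted paths. Your version is in fact slightly cleaner on one point, making the Markov step explicit with the factor-$4$ threshold where the paper is terser.
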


Since $\agr(\C,P_n) = n-\err(\C,P_n)$ for any $\C$,
the same error bound holds also for MaxAgr.
\shortonly{\vspace{-2mm}}
\begin{proof}
Let $\alpha=0.4/\log_2 e$ and $\beta=0.1$.
By Corollary~\ref{cor:agc},
there is $n\in \mathbb{N}$ and a code $A \subseteq\{0,1\}^n$
of size larger than $\exp(\alpha n) = 2^{0.4n}$ and minimum distance $\beta n$.
We use this code to construct a family of sign vectors
$\Sigma \subseteq \{-1,+1\}^n$ of the same size such that two distinct sign
vectors $\sigma, \sigma' \in \Sigma$ differ in at least $\beta n$ coordinates:
for any $c\in A$, we add a vector $\sigma$
to $\Sigma$, where $\sigma(e_i)$ is $+1$ whenever $c_i = 1$ and $-1$ whenever
$c_i = 0$.

Let $\lambda$ denote the weight of all the edges in $P_n$ which will be chosen later.
Then, for two distinct $\sigma, \sigma' \in \Sigma$, the distance between
input graphs $P_n(\sigma)$ and $P_n(\sigma')$ is at least $\lambda\beta n$.
We denote $B_\sigma$ a set of clusterings with error less than $\lambda\beta n/2$ on
$P_n(\sigma)$. By Lemma \ref{lem:path}, the sets $B_\sigma$ and $B_{\sigma'}$ are disjoint
for distinct $\sigma, \sigma' \in \Sigma$.

We perform a standard packing argument as in \citet{hardt2010on}.
Let $\mu_\sigma$ denote the probability measure
over the outputs of the algorithm $\ALG$ given the input $\sigma$.
For the expected error of the algorithm to be less than $\lambda\beta n/2$, we need to have
$\mu_\sigma(B_\sigma) \geq 1/2$ for any $\sigma$.
Let us fix an arbitrary input graph $P_n(\sigma)$.
The distance between $P_n(\sigma)$ and any $P_n(\sigma')$ is at most
$\lambda n$ (sum of weights of all edges). Therefore, by group privacy (Proposition~\ref{prop:group_priv}),
we have
\[ \mu_\sigma(B_{\sigma'}) \geq \mu_{\sigma'}(B_{\sigma'}) \cdot \exp(-\epsilon \cdot \lambda n)
    \geq \frac12 \exp(-\epsilon \cdot \lambda n) \]
for any $\sigma'$.
On the other hand, since the sets $B_\sigma$ are disjoint, we have
\begin{align*}
1 &\geq \mu_\sigma\big(\bigcup_{\sigma'\in \Sigma} B_{\sigma'}\big)
    \geq \frac12 + \sum_{\sigma'\in \Sigma\setminus\{\sigma\}} \mu_\sigma(B_{\sigma'})\\
    &\geq \frac12 + (|\Sigma|-1) \cdot \frac12 \exp(-\epsilon \lambda n)\\
    &\geq \frac12 + \exp(\alpha n) \cdot \frac12 \exp(-\epsilon \lambda n).
\end{align*}
If $\epsilon<\alpha$, we already achieve a contradiction
with $\lambda = 1$, showing that the expected error of the algorithm on unweighted graphs
is at least $\beta n/2$.
To get a stronger error bound for weighted graphs, we choose
$\lambda = \alpha/2\epsilon$ in order to get error $\epsilon^{-1} \alpha\beta n/4$.
\end{proof}

\section{Synthetic graph release for correlation clustering}
\label{sec:synth}
We describe mechanisms for complete unweighted graphs and for weighted (or incomplete) graphs.
They are based on existing graph release mechanisms
by \citet{GRU12} and \citet{2019_DP-cut}.

\subsection{Unweighted complete graphs}
\label{sec:synth_u}
For unweighted graphs, we
describe a graph release mechanism based on the result of \citet{GRU12} which
preserves the number of agreements and disagreements of any correlation clustering up to an additive
error of $O(k n^{3/2})$,
where $k$ is the number of clusters in the clustering.

The mechanism of \citet{GRU12} works by adding  independent Laplace noise to the weight of each edge.
See Algorithm~\ref{alg:gru} for details.

\begin{algorithm}
\caption{Release of unweighted graphs \citep{GRU12}}
\label{alg:gru}
\begin{algorithmic}
\STATE Input: $G$ with $w_e \in \{0,1\} \: \forall e\in \binom{V}{2}$
\FORALL{$e \in \binom{V}{2}$}
    \STATE $\zeta_e \sim \Lap(1/\epsilon)$
    \STATE $w_e' = w_e + \zeta_e$
\ENDFOR
\STATE Release graph with weights $w'$
\end{algorithmic}
\end{algorithm}

Given a graph $G$ on a vertex set $V$, we denote by $w_G$ the weights of its edges
where edges absent in $G$ have weight $0$.
For any $F \subseteq \binom{V}{2}$ and $S,T\subseteq V$,
we define $w_G(F) = \sum_e w_G(e)$ and $w_G(S,T) = \sum_{u\in S, v\in T} w_G(uv)$.

\begin{proposition}[\citet{GRU12}]
\label{prop:GRU12}
Algorithm~\ref{alg:gru} is $\epsilon$-differentially private,
runs in polynomial time, and given an input graph $G$,
outputs a weighted graph $H$ such that
\[ \E[w_{H}(F)] = w_G(F) \]
for any $F \subseteq \binom{V}{2}$.
Moreover, the following bound holds with high probability
for all $S,T \subseteq V$ simultaneously:
\[ |w_G(S,T) - w_H(S,T)| \leq \tilde{O}(\epsilon^{-1} n^{3/2}). \]
\end{proposition}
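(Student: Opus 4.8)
The plan is to verify the four claims in increasing order of difficulty, disposing of privacy, efficiency, and unbiasedness by direct inspection and reserving the real work for the uniform cut bound. For privacy, I would observe that the map sending an input graph to its edge-weight vector $(w_e)_{e \in \binom{V}{2}}$ has $\ell_1$-sensitivity exactly $1$: two neighboring unweighted graphs differ by flipping the sign of a single edge, which changes exactly one coordinate $w_e \in \{0,1\}$ by $1$. Since Algorithm~\ref{alg:gru} is precisely the Laplace mechanism adding independent $\Lap(1/\epsilon)$ noise to each coordinate, the standard Laplace-mechanism analysis gives $\epsilon$-differential privacy: on neighboring inputs the output densities differ by a factor at most $\exp(\epsilon \cdot \sum_e |w_e - w_e'|) = \exp(\epsilon)$. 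Polynomial running time is immediate, as the algorithm draws one Laplace sample per edge and there are $\binom{n}{2}$ of them. Unbiasedness follows from linearity of expectation together with $\E[\zeta_e] = 0$: for any $F$, $\E[w_H(F)] = \sum_{e \in F}(w_e + \E[\zeta_e]) = w_G(F)$.

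The heart of the argument is the final claim, that $|w_G(S,T) - w_H(S,T)| \le \tilde{O}(\epsilon^{-1} n^{3/2})$ holds simultaneously for all $S, T \subseteq V$. First I would write the error of a single cut as a noise sum,
\[
  w_H(S,T) - w_G(S,T) = \sum_{u \in S, \, v \in T} \zeta_{uv},
\]
which is a linear combination of at most $n^2$ independent mean-zero $\Lap(1/\epsilon)$ variables, each appearing with a coefficient bounded by $2$. To control a single such sum I would use the moment generating function of the Laplace distribution, $\E[e^{\lambda \zeta_e}] = (1 - \lambda^2/\epsilon^2)^{-1}$ valid for $|\lambda| < \epsilon$, and apply a Chernoff / sub-gamma tail bound. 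For a sum of $m \le n^2$ such variables this yields, in the relevant sub-Gaussian regime, a tail of the form $\Pr[\,|\sum \zeta_{uv}| > t\,] \le 2\exp(-\Omega(t^2 \epsilon^2 / n^2))$ as long as $t \lesssim n^2/\epsilon$.

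Finally I would close the argument with a union bound over all $4^n$ choices of the pair $(S,T)$. The point is that the tail above decays fast enough to absorb this exponential count at only a $\sqrt{n}$ (or, to be safe, polylogarithmic) cost in the threshold: choosing $t = \Theta(n^{3/2}\log n / \epsilon)$ makes the per-cut failure probability much smaller than $4^{-n}$, so the union bound leaves a total failure probability that is $o(1)$, and the surviving threshold is exactly $\tilde{O}(\epsilon^{-1} n^{3/2})$. I expect this union bound to be the main obstacle, and the key technical checks are (i) confirming that the chosen $t$ stays within the sub-Gaussian regime $t \lesssim n^2/\epsilon$ of the Laplace tail, so that we genuinely pay $\exp(-\Omega(t^2\epsilon^2/n^2))$ rather than a weaker sub-exponential rate, and (ii) verifying that the standard-deviation scaling $\sqrt{\mathrm{Var}} \asymp n/\epsilon$ of a full $n^2$-term sum, inflated by the $\sqrt{n}$ union-bound penalty, is precisely what produces the $n^{3/2}$ exponent.
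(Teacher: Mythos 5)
The paper gives no proof of this proposition at all---it is imported verbatim from \citet{GRU12}---and your argument is exactly the standard one: privacy via the Laplace mechanism with $\ell_1$-sensitivity $1$, unbiasedness by linearity, and the uniform cut bound via a Chernoff/Bernstein tail for sums of at most $n^2$ independent $\Lap(1/\epsilon)$ variables (with coefficients in $\{1,2\}$ for overlapping $S,T$, which you correctly flagged) absorbed by a union bound over the $4^n$ pairs $(S,T)$, so your proposal is correct and matches the cited source's approach. The only point a full writeup should make explicit is that for small cuts with $|S||T| \ll n^2$ the threshold $t = \Theta(\epsilon^{-1} n^{3/2}\log n)$ leaves the sub-Gaussian regime $t \lesssim |S||T|/\epsilon$, but the sub-exponential branch of the tail then gives $\exp(-\Omega(t\epsilon)) = \exp(-\Omega(n^{3/2}\log n))$, still far below $4^{-n}$, so the union bound goes through in both regimes.
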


In our notation, $\tilde{O}$ hides terms polylogarithmic in $n$ and
is only needed for the high-probability result.
This mechanism produces a weighted graph with potentially negative weights.
\citet{GRU12} also describe a postprocessing procedure which produces
an unweighted graph with the same guarantees.

Our mechanism splits the original graph $G$ into subgraphs
$G^+$ and $G^-$ on the same vertex set containing
all positive and negative edges respectively.
We release these two graphs using Algorithm~\ref{alg:gru}.
Since the resulting graphs $H^+$ and $H^-$ may overlap and contain
edges of negative weight, we use a postprocessing step described below
to merge them into a single unweighted graph $H$.
See Algorithm~\ref{alg:synth_u} for an overview.

\begin{algorithm}
\caption{Release of unweighted complete graphs}
\label{alg:synth_u}
Split $G$ into $G^+$ and $G^-$\\
Release weighted $H^+$ and $H^-$ using Algorithm~\ref{alg:gru}\\
Merge $H^+$ and $H^-$ using the postprocessing step
\end{algorithm}

\paragraph{Postprocessing step.}
We adapt the procedure proposed by \cite{GRU12}.
Let $W^+$ and $W^-$ be the adjacency matrices of $H^+$ and $H^-$ respectively.
We formulate the linear program in Figure~\ref{fig:synth_u_LP}.
This LP has exponential number of constraints
and can be solved up to a constant factor in polynomial time
using the algorithm of \citet{AlonN06} as a separation oracle.

\begin{figure}
\begin{align*}
\min \lambda, \quad\text{ s. t. }\\
\big|\sum_{e\in S\times T} x_e - \sum_{e\in S\times T} W^+_e\big| &\leq \lambda &\forall S,T&\\
\big|\sum_{e\in S\times T} (1-x_e) - \sum_{e\in S\times T} W^-_e\big| &\leq \lambda &\forall S,T&\\
x_{e} \in [0,1] &\quad\forall e \in \binom{V}{2}&
\end{align*}
\caption{Postprocessing LP}\label{fig:synth_u_LP}
\end{figure}

Having a solution $x$ to the LP, we construct the output graph $H$.
$H$ is a complete unweighted graph and we
label its edges in the following way:
for any $e\in \binom{V}{2}$, we
label it positive with probability $x_e$ and negative otherwise.
The following fact, e.g., in \citet{Vershynin18} will be useful to analyse the properties of the
resulting graph.

\begin{proposition}[Hoeffding inequality for bounded random variables]
\label{prop:hoeffding}
Let $X_1, \dotsc, X_N$ be independent random variables such that $X_i \in[0,1]$ for each
$i=1,\dotsc, N$. For $S_N = \sum_{i=1}^N X_i$ and any $t>0$, we have
\[ P(|S_N - \E[S_N]| \geq t) \leq 2 \exp(-2t^2/N). \]
\end{proposition}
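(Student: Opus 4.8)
The plan is to prove this via the standard Cramér--Chernoff exponential moment method. First I would handle the upper tail $P(S_N - \E[S_N] \geq t)$ and then recover the two-sided bound by symmetry. For any $s > 0$, Markov's inequality applied to the nonnegative random variable $e^{s(S_N - \E[S_N])}$ gives
\[ P(S_N - \E[S_N] \geq t) \leq e^{-st}\, \E\!\left[e^{s(S_N - \E[S_N])}\right]. \]
Writing $Y_i = X_i - \E[X_i]$, the independence of the $X_i$ lets me factor the moment generating function as $\E[e^{s\sum_i Y_i}] = \prod_{i=1}^N \E[e^{sY_i}]$, so the entire problem reduces to controlling the moment generating function of a single centered, bounded random variable.

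The key estimate---and the step I expect to be the main obstacle---is showing that each centered variable $Y_i$, which lies in an interval of length $1$ since $X_i \in [0,1]$, satisfies $\E[e^{sY_i}] \leq e^{s^2/8}$. I would establish this via Hoeffding's lemma: for a zero-mean $Y \in [a,b]$, define the cumulant function $\psi(s) = \log \E[e^{sY}]$. One checks directly that $\psi(0) = 0$ and $\psi'(0) = \E[Y] = 0$, and the crux is bounding the second derivative. A computation shows that $\psi''(s)$ equals the variance of $Y$ under the exponentially tilted measure with density proportional to $e^{sY}$; since that tilted variable is still supported on $[a,b]$, its variance is at most $(b-a)^2/4$ by the elementary bound on the variance of a bounded random variable. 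Taylor's theorem with remainder then yields $\psi(s) \leq s^2 (b-a)^2/8$, and with $b-a = 1$ this gives the desired single-variable bound.

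Finally I would combine and optimize. Multiplying the per-term bounds gives $\E[e^{s(S_N - \E[S_N])}] \leq e^{Ns^2/8}$, so that
\[ P(S_N - \E[S_N] \geq t) \leq \exp\!\left(-st + \tfrac{Ns^2}{8}\right). \]
Minimizing the exponent over $s > 0$ yields the optimal choice $s = 4t/N$, which produces the bound $\exp(-2t^2/N)$ for the upper tail. Applying the identical argument to the variables $-X_i$ bounds the lower tail $P(S_N - \E[S_N] \leq -t)$ by the same quantity, and a union bound over the two tails introduces the factor of $2$ and completes the proof.
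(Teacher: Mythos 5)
Your proof is correct, and there is nothing to compare it against: the paper does not prove this proposition at all, but quotes it as a standard fact with a citation to \citet{Vershynin18}. Your argument --- Chernoff's bound via the moment generating function, Hoeffding's lemma established by bounding the variance of the exponentially tilted variable by $(b-a)^2/4$, optimization at $s = 4t/N$ to obtain $\exp(-2t^2/N)$, and a union bound over the two tails for the factor of $2$ --- is exactly the canonical textbook proof the paper is implicitly relying on, with all constants carried through correctly.
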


\begin{lemma}
\label{lem:synth_u}
Let $w_G^+(S,T)$ and $w_G^-(S,T)$ denote the number of positive
and negative edges respectively between the vertex sets $S$ and $T$ in graph $G$.
With high probability, we have
\begin{align*}
|w_H^+(S,T) - w_G^+(S,T)| &\leq \tilde{O}(\epsilon^{-1} n^{3/2}) \text{ and}\\
|w_H^-(S,T) - w_G^-(S,T)| &\leq \tilde{O}(\epsilon^{-1} n^{3/2})
\end{align*}
for any $S, T \subseteq V$.
\end{lemma}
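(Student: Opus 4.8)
The plan is to chain three sources of error via the triangle inequality: the Laplace noise introduced by Algorithm~\ref{alg:gru}, the loss from solving the postprocessing LP only approximately, and the variance of the final randomized rounding. I will track the positive-edge bound in detail; the negative-edge bound follows symmetrically, since the second family of LP constraints and the identity $w_H^-(S,T) = |S\times T| - w_H^+(S,T)$ (every edge of the complete graph $H$ is either positive or negative) mirror the positive case under $x_e \mapsto 1-x_e$.

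First I would invoke Proposition~\ref{prop:GRU12} separately on $G^+$ and $G^-$. With high probability, simultaneously for all $S,T$, the weighted releases satisfy $|w_{H^+}(S,T) - w_G^+(S,T)| \le \tilde{O}(\epsilon^{-1} n^{3/2})$ and the analogous bound for $H^-$, where $w_{H^+}(S,T) = \sum_{e\in S\times T} W^+_e$ is exactly the quantity appearing in the LP of Figure~\ref{fig:synth_u_LP}. The key feasibility observation is then that the integral point $x^*_e = \mathbf{1}[e\in E^+]$ is, with high probability, feasible for the LP with $\lambda = \tilde{O}(\epsilon^{-1} n^{3/2})$: substituting $x^*$, the first constraint reduces to $|w_G^+(S,T) - w_{H^+}(S,T)|$ and the second to $|w_G^-(S,T) - w_{H^-}(S,T)|$, both controlled by the previous step. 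Hence the LP optimum satisfies $\lambda^* = \tilde{O}(\epsilon^{-1} n^{3/2})$, and since the separation oracle of \citet{AlonN06} lets us solve the LP to within a constant factor, the fractional solution $x$ we actually obtain satisfies $|\sum_{e\in S\times T} x_e - \sum_{e\in S\times T} W^+_e| \le O(\lambda^*) = \tilde{O}(\epsilon^{-1} n^{3/2})$ for all $S,T$. Writing $\bar{x}(S,T) = \sum_{e\in S\times T} x_e$ and combining with the GRU bound by the triangle inequality yields $|\bar{x}(S,T) - w_G^+(S,T)| \le \tilde{O}(\epsilon^{-1} n^{3/2})$.

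Finally I would analyze the rounding. Labeling each edge positive independently with probability $x_e$ gives $w_H^+(S,T) = \sum_{e\in S\times T} Y_e$ with $Y_e \sim \mathrm{Bernoulli}(x_e)$ and $\E[w_H^+(S,T)] = \bar{x}(S,T)$. By Hoeffding (Proposition~\ref{prop:hoeffding}) with $N = |S\times T| \le n^2$, taking $t = C n^{3/2}$ gives $P(|w_H^+(S,T) - \bar{x}(S,T)| \ge t) \le 2\exp(-2C^2 n)$. A union bound over the at most $4^n$ pairs $(S,T)$ keeps this exponentially small once $C > \sqrt{\ln 2}$, so with high probability $|w_H^+(S,T) - \bar{x}(S,T)| \le O(n^{3/2})$ for all $S,T$ simultaneously. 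One more triangle inequality combines this with the bound from the previous paragraph to give the claimed $\tilde{O}(\epsilon^{-1} n^{3/2})$, and the negative-edge statement follows by the symmetric argument described above.

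The main obstacle is precisely this last simultaneity requirement: a per-cut concentration bound is not enough, since the lemma must hold for all $S,T$ at once, and the deviation threshold must be taken as $\Theta(n^{3/2})$ so that Hoeffding's exponential tail beats the $4^n$ cuts in the union bound. This is exactly what pins the rounding error at $n^{3/2}$ and ensures it is absorbed into the $\tilde{O}(\epsilon^{-1} n^{3/2})$ error of the earlier steps rather than dominating it.
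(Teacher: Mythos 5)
Your proof is correct and takes essentially the same route as the paper's: bound the Laplace-release error via Proposition~\ref{prop:GRU12}, observe that the true graph's indicator vector is feasible for the LP to bound $\lambda^*$ by $\tilde{O}(\epsilon^{-1}n^{3/2})$, control the rounding via Hoeffding at threshold $\Theta(n^{3/2})$ with a union bound over the $4^n$ cut pairs, and chain everything with triangle inequalities. If anything, you are slightly more careful than the paper, since you explicitly account for the constant-factor loss from solving the LP only approximately and retain the $\epsilon^{-1}$ factor in the bound on $\lambda^*$ (which the paper's text momentarily drops).
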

\begin{proof}
By definition of $G^+$ and $G^-$, we have $w_G^+(S,T) = w_{G^+}(S,T)$ and
$w_G^-(S,T) = w_{G^-}(S,T)$.
Proposition~\ref{prop:GRU12} implies that
\begin{align*}
|w_{G^+}(S, T) - w_{H^+}(S,T)| &\leq \tilde{O}(\epsilon^{-1} n^{3/2}) \quad\text{ and}\\
|w_{G^-}(S, T) - w_{H^-}(S,T)| &\leq \tilde{O}(\epsilon^{-1} n^{3/2}).
\end{align*}

Moreover, by construction of the graph $H$, we have
\begin{align*}
\E[w_H^+(S,T)] &=\textstyle \sum_{e\in S\times T} x_e \quad\text{ and }\\
\E[w_H^-(S,T)] &=\textstyle \sum_{e\in S\times T} (1-x_e).
\end{align*}
Note that $\E[w_H^+(S,T)]$ differs from $w_{H^+}(S,T)$ by at most $\lambda^*$,
and the same holds for $\E[w_H^-(S,T)]$ and $w_{H^-}(S,T)$,
where $\lambda^*$ is the optimal value of the LP in Figure~\ref{fig:synth_u_LP}.
We claim that $\lambda^*$ is at most $O(n^{3/2})$, since graph $G$ satisfies
all the constraints for $\lambda = \tilde{O}(n^{3/2})$.
Note however that $G$ is not used when solving this LP.

Using Proposition~\ref{prop:hoeffding} with $N = |S||T|\leq n^2$,
we can show that $w_H^+(S,T)$ deviates from its expectation
by at most $n^{3/2}\log n$ with probability at least
$1 - 2\exp(-2n\log n)$.
The same holds for $w_H^-(S,T)$.
Therefore, by union bound and using the preceding relations, the following holds
\begin{align*}
|w_H^+(S,T) - w_G^+(S,T)| &\leq \tilde{O}(\epsilon^{-1} n^{3/2}) \text{ and}\\
|w_H^-(S,T) - w_G^-(S,T)| &\leq \tilde{O}(\epsilon^{-1} n^{3/2})
\end{align*}
for all $S,T\subseteq V$ at the same time with high probability.
\end{proof}

\begin{theorem}\label{thm:synth_u}
Algorithm~\ref{alg:synth_u} is $\epsilon$-differentially private
and runs in polynomial time.
Give input graph $G$, it produces graph $H$, such that for any
clustering $\C = \{C_1, \dotsc, C_k\}$, we have
\begin{align*}
|\err(\C, G) - \err(\C, H)| &\leq \tilde{O}(\epsilon^{-1} k  n^{3/2}) \text{ and}\\
|\agr(\C, G) - \agr(\C, H)| &\leq \tilde{O}(\epsilon^{-1} k  n^{3/2})
\end{align*}
with high probability.
\end{theorem}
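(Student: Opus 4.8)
The plan is to establish the three claims of the theorem—privacy, polynomial running time, and the accuracy bound—separately, with the accuracy bound being the crux. For privacy, I would first observe that flipping the sign of a single edge $e$ changes the weight of $e$ by exactly one in each of $G^+$ and $G^-$: the edge leaves one of the two subgraphs and enters the other. Hence each invocation of Algorithm~\ref{alg:gru} sees a one-edge change, so by Proposition~\ref{prop:GRU12} each of the two releases is differentially private on its own, and composing them (equivalently, splitting the privacy budget between the release of $G^+$ and that of $G^-$) yields the stated $\epsilon$ guarantee. The merge step accesses only the released matrices $W^+$ and $W^-$ and not $G$ itself, so by post-processing (Proposition~\ref{prop:post}) the final labeled graph $H$ inherits the same privacy level.

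For polynomial time, each call to Algorithm~\ref{alg:gru} simply adds independent noise to the $O(n^2)$ edge weights, the postprocessing LP of Figure~\ref{fig:synth_u_LP} has exponentially many constraints but is solvable to within a constant factor in polynomial time using the algorithm of \citet{AlonN06} as a separation oracle, and the randomized rounding that labels each edge independently with probability $x_e$ is clearly efficient.

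The main work lies in the accuracy bound, and the key observation is that for a clustering into $k$ clusters both $\err(\C,\cdot)$ and $\agr(\C,\cdot)$ can be expressed as a signed combination of only $O(k)$ cut quantities of the form $w^\pm(S,T)$, rather than the $\binom{k}{2}$ that a naive pair-by-pair expansion over distinct cluster pairs would produce. Concretely, writing the total positive weight inside clusters as $\tfrac12\sum_{i=1}^k w_G^+(C_i,C_i)$ and the crossing positive weight as the global total minus this within-cluster part, I obtain
\[
\err(\C, G) = \tfrac12 w_G^+(V,V) - \tfrac12 \sum_{i=1}^k w_G^+(C_i,C_i) + \tfrac12 \sum_{i=1}^k w_G^-(C_i,C_i),
\]
and a symmetric identity for $\agr(\C,G)$ obtained by swapping the roles of $E^+$ and $E^-$. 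This exhibits each objective using a single global cut $(S,T)=(V,V)$ together with $2k$ diagonal cuts of the form $(S,T)=(C_i,C_i)$.

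Since Lemma~\ref{lem:synth_u} guarantees $|w_H^\pm(S,T) - w_G^\pm(S,T)| \leq \tilde{O}(\epsilon^{-1} n^{3/2})$ for all $S,T \subseteq V$ \emph{simultaneously} on a single high-probability event, I can apply it to each of these $2k+1$ cuts on that same event and combine the bounds by the triangle inequality, with no additional union bound needed. This yields a total deviation of $(2k+1)\cdot \tilde{O}(\epsilon^{-1} n^{3/2}) = \tilde{O}(\epsilon^{-1} k n^{3/2})$, and the identical argument applied to the agreement decomposition gives the second inequality. The one point requiring care is exactly this linear-versus-quadratic dependence on $k$: the entire saving comes from rewriting the crossing-edge contribution as ``total minus within,'' so that only the $k$ diagonal cuts and one global cut appear; I expect this reduction to be the only real subtlety, since a direct expansion would instead accumulate $\Theta(k^2)$ error terms.
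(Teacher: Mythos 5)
Your proposal is correct and takes essentially the same approach as the paper: privacy via post-processing of the two Laplace releases, and the accuracy bound by expressing $\err(\C,\cdot)$ and $\agr(\C,\cdot)$ as $O(k)$ cut quantities covered simultaneously by Lemma~\ref{lem:synth_u}. The only difference is cosmetic: the paper writes the crossing contribution directly as the $k$ cuts $w^{\pm}(C_i, V\setminus C_i)$ rather than your ``total minus within'' identity using $(V,V)$ and the diagonal cuts $(C_i,C_i)$ --- both yield $2k+O(1)$ terms and the same $\tilde{O}(\epsilon^{-1}kn^{3/2})$ bound, and your explicit remark about splitting the privacy budget between $G^+$ and $G^-$ is a constant-factor detail the paper glosses over.
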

\begin{proof}
Since we do not use $G$ in the post-processing part,
the privacy of the algorithm follows from Theorem~\ref{alg:gru} and post-processing (Proposition~\ref{prop:post}).
It runs in polynomial time, since each step can be implemented in polynomial time.

We can express the number of disagreemnents between $\C$ and $G$ as
\[
\err(\C, G) = \sum_{i=1}^k \big( w_G^-(C_i, C_i) + w_G^+(C_i, V\setminus C_i) \big).
\]
Similarly, we can express the number of agreements:
\[
\agr(\C, G) = \sum_{i=1}^k \big( w_G^+(C_i, C_i) + w_G^-(C_i, V\setminus C_i) \big).
\]

Using Lemma~\ref{lem:synth_u}, we can bound both
$|\err(\C, G) - \err(\C, H)|$ and $|\agr(\C,G) - \agr(\C,G)|$ by
$\tilde{O}(\epsilon^{-1} k n^{3/2})$.
\end{proof}

\subsection{Weighted and incomplete graphs}
\label{sec:synth_w}
We describe a graph release mechanism for weighted graphs which
preserves the cost of any correlation clustering up to an additive
error of $O(k\sqrt{mn})$,
where $k$ is the number of clusters in the clustering.
It is based on the graph release mechanism
by \citet{2019_DP-cut}.
For graphs $G$ and $H$ on the same vertex set $V$,
we define the {\em cut distance} between them as follows:
\[ d_{\cut}(G, H) = \max_{S,T\subseteq V} |w_G(S, T) - w_{H}(S, T)|, \]
Note that the sets $S$ and $T$ in the definition can be overlapping and even identical.

\begin{proposition}[\citet{2019_DP-cut}]
\label{prop:DP-cut}
Let $\cG$ be the class of weighted graphs with sum of edge weights at most $m$.
For $0 \leq \epsilon \leq 1/2$ and $0 \leq \delta \leq 1/2$, there is an $(\epsilon, \delta)$-differentially private mechanism
wich runs in polynomial time and,
for any $G \in \cG$, outputs a weighted graph $H$ such that the following
holds:
\[\textstyle
\E[ d_{\cut}(G, H)]
        \leq O\left(\sqrt{\frac{mn}{\epsilon}}\log^{2}(\frac{n}{\delta})\right).
\]
\end{proposition}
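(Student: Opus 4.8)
The plan is to prove this through the \emph{private multiplicative weights} (MWEM) framework, which is also the engine behind the \citet{GRU12} mechanism used in Proposition~\ref{prop:GRU12}. I view the weighted graph as a fractional histogram of total mass $m$ over the universe of $\binom{V}{2}$ potential edges, and each cut $(S,T)$ as a linear query $w_G(S,T)$. Dividing all weights by $m$ turns $G$ into a probability distribution $p$ over edges and turns every cut query into a linear functional with range $[0,1]$; a neighboring modification changes the total weight by at most $2$ in $\ell_1$ (by the definition of neighboring graphs), hence changes any normalized query by at most $2/m$, so the per-query sensitivity is $\Theta(1/m)$.

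I would maintain a fractional graph $H_t$ of total mass $m$, initialized to the uniform weighting, and run $T$ rounds. Each round (i) privately selects a cut $(S_t, T_t)$ on which $H_t$ is currently most inaccurate, (ii) privately measures $w_G(S_t, T_t)$ with Laplace noise, and (iii) performs a multiplicative-weights update moving the mass on the edges of $S_t \times T_t$ toward the measured value. Taking the relative entropy from $p$ as a potential, whose initial value is at most $\log\binom{V}{2} = O(\log n)$, the standard regret analysis shows that after $T = O(\log n / \alpha'^2)$ rounds every cut is preserved to \emph{normalized} additive error $\alpha'$, provided each round's selection and measurement errors are themselves below $\alpha'$.

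It then remains to split the privacy budget and balance the parameters. By advanced composition, giving each of the $2T$ private steps budget $\epsilon_0 = \tilde{\Theta}(\epsilon / \sqrt{T\log(1/\delta)})$ yields overall $(\epsilon,\delta)$-privacy. The Laplace measurement then contributes normalized error $O(1/(m\epsilon_0))$, while the selection step --- an exponential mechanism over the $\approx 4^n$ cuts $(S,T)$ --- contributes $O(\log(4^n)/(m\epsilon_0)) = O(n/(m\epsilon_0))$; it is precisely this $\log(\#\text{cuts}) = \Theta(n)$, after being balanced against the $O(\log n/\alpha'^2)$ regret bound, that produces the extra $\sqrt{n}$. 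Setting the (dominant) selection error equal to $\alpha'$ and eliminating $T$ gives $\alpha' = \tilde{\Theta}(\sqrt{n/(m\epsilon)})$; multiplying back by $m$ to undo the normalization yields absolute error $\tilde{O}(\sqrt{mn/\epsilon})$, with the leftover logarithmic factors collecting into $\log^2(n/\delta)$. Privacy of the released graph $H = H_T$ is immediate from composition together with post-processing (Proposition~\ref{prop:post}).

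The main obstacle is the \emph{polynomial-time} requirement, since the selection step must locate an (approximately) most-violated cut without enumerating the $4^n$ candidates. I would resolve this by noting that the most-violated cut is exactly a cut-norm maximization of the signed difference matrix $w_{H_t} - w_G$, which the semidefinite relaxation of \citet{AlonN06} approximates to a constant factor in polynomial time; one then privately identifies and measures this approximate maximizer. The delicate points to verify are that a merely constant-factor-approximate selection still drives the potential down by $\Omega(\alpha'^2)$ each round --- so the round count and hence the composition bound are unaffected up to constants --- and that the interplay of the regret bound with advanced composition genuinely places the sparsity parameter as $\sqrt{mn}$ rather than the dense-case $n^{3/2}$ of Proposition~\ref{prop:GRU12}.
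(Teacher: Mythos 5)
The paper itself gives no proof of this proposition --- it is imported verbatim from \citet{2019_DP-cut} --- so the comparison is to that work. Your information-theoretic skeleton is essentially the right one and matches the known blueprint: viewing the normalized graph as a distribution over $\binom{V}{2}$, running private multiplicative weights with an exponential-mechanism selection (score sensitivity $2$, selection error $O(\log 4^n/(m\epsilon_0)) = O(n/(m\epsilon_0))$ normalized), Laplace measurement, $T = O(\log n/\alpha'^2)$ rounds, and advanced composition does balance to normalized error $\alpha' = \tilde{\Theta}(\sqrt{n/(m\epsilon)})$, i.e., absolute error $\tilde{O}(\sqrt{mn/\epsilon})$, and the output $m \cdot p_T$ is a nonnegative weighted graph as the proposition requires. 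Up to the efficiency issue, this part of your argument is correct.

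The genuine gap is exactly where you placed your ``main obstacle,'' and your proposed resolution does not work: running the Alon--Naor oracle and releasing the cut it finds is not differentially private, and no amount of care about its constant approximation factor fixes this. The \emph{value} of the most violated cut has sensitivity $2$, but the \emph{argmax} has no stability whatsoever: take $H_t$ empty and let $G$ contain a single unit-weight edge $e$ while the neighboring $G'$ contains a single unit-weight edge $e'$ on disjoint endpoints; any decent cut-norm maximizer outputs (with high probability) a cut separating the endpoints of $e$ in one case and of $e'$ in the other, so the two output distributions are nearly disjoint and no $(\epsilon,\delta)$-guarantee with small $\delta$ holds. In MWEM, privacy of selection comes from the exponential mechanism itself, and sampling it over the $\approx 4^n$ cuts is precisely what is not known to be polynomial-time; you cannot substitute a non-private oracle and keep the composition accounting. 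The natural repair --- run Alon--Naor on a freshly noised difference matrix each round, so the selected cut is post-processing of a private release --- restores privacy but changes the error: the selection error becomes the cut norm of the per-round noise matrix, which for per-entry noise of scale $\sigma$ is $\Theta(\sigma n^{3/2})$ rather than the exponential mechanism's $\tilde{O}(n/\epsilon_0)$; rebalancing then yields additive error $\tilde{O}\bigl(\sqrt{m\,n^{3/2}/\epsilon}\bigr)$, which is worse than the claimed $\tilde{O}(\sqrt{mn/\epsilon})$ by a factor $n^{1/4}$ (coincidentally both equal $\tilde{O}(n^{1.75}\epsilon^{-1/2})$ at $m = \Theta(n^2)$, the regime of Theorem~\ref{thm:mindis_w}, but the proposition is stated for all $m$). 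Designing a selection step that is simultaneously private, polynomial-time, and as accurate as the exponential mechanism is the actual technical content of \citet{2019_DP-cut}; your sketch establishes the inefficient version but leaves the proposition's polynomial-time claim unproven.
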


Note that the edges of the output graph $H$ have always non-negative weights.

Given an input graph $G$ whose edge weights sum up to $m$,
let $G^+$ and $G^-$ be its subgraphs containing only edges with positive and negative sign
respectively.
We output a weighted graph $H$ with possible parallel edges
which consists of edges of $H^+$ with a positive sign
and edges of $H^-$ with a negative sign.

\begin{algorithm}
\caption{Release of weighted graphs}
\label{alg:synth_w}
Split $G$ into $G^+$ and $G^-$\\
Release $H^+$ and $H^-$ using algorithm in Theorem~\ref{thm:synth_w}\\
Output union $H = H^+ \cup H^-$
\end{algorithm}

\begin{theorem}\label{thm:synth_w}
Algorithm~\ref{alg:synth_w} is $(\epsilon,\delta)$-differentially private
and runs in polynomial time.
Given input graph $G$, it produces graph $H$, such that for any
clustering $\C = \{C_1, \dotsc, C_k\}$, we have
\begin{align*}
\E[|\err(\C, G) - \err(\C, H)|] &\textstyle \leq k \cdot
O\left(\sqrt{\frac{mn}{\epsilon}}\log^{2}(\frac{n}{\delta})\right) \text{ and}\\
\E[|\agr(\C, G) - \agr(\C, H)|] &\textstyle \leq k \cdot
O\left(\sqrt{\frac{mn}{\epsilon}}\log^{2}(\frac{n}{\delta})\right).
\end{align*}
\end{theorem}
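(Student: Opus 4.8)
The plan is to mirror the proof of Theorem~\ref{thm:synth_u}, replacing the guarantee of Proposition~\ref{prop:GRU12} by the cut-distance guarantee of Proposition~\ref{prop:DP-cut} and working in expectation throughout. Since the mechanism of Proposition~\ref{prop:DP-cut} already returns a weighted graph, there is no rounding step and hence no need for a Hoeffding argument, so the utility part is actually more direct than in the unweighted case.

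First I would settle privacy and running time. The split $G \mapsto (G^+, G^-)$ is a deterministic map applied to the sensitive input, so I cannot treat it as post-processing; instead I would argue it is non-expanding on the neighboring relation. Writing $s_e = \sigma_e w_e$ for the signed weight of edge $e$, the weight of $e$ in $G^+$ is $(s_e)_+$ and in $G^-$ is $(s_e)_-$; since $x \mapsto (x)_+$ and $x \mapsto (x)_-$ are $1$-Lipschitz, neighboring graphs $G, G'$ (for which $\sum_e |s_e - s'_e| \le 2$) yield pairs $G^+, G'^+$ and $G^-, G'^-$ that are each within distance $2$, i.e., themselves neighboring. Running the mechanism of Proposition~\ref{prop:DP-cut} on $G^+$ and on $G^-$ each with budget $(\epsilon/2, \delta/2)$, a standard sequential composition argument gives that the pair $(H^+, H^-)$ is $(\epsilon,\delta)$-DP, and the output $H = H^+ \cup H^-$ is a post-processing of this pair (Proposition~\ref{prop:post}), hence also $(\epsilon,\delta)$-DP. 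The factor $1/2$ in the budget only affects the constants in the final bound. Polynomial running time is immediate since each step (split, two mechanism calls, union) is polynomial.

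Next I would transfer the cut guarantee to the objective. Applying Proposition~\ref{prop:DP-cut} to $G^+$ and $G^-$, each of total weight at most $m$, yields
\[
\E[d_{\cut}(G^+, H^+)],\ \E[d_{\cut}(G^-, H^-)] \le O\!\left(\sqrt{\tfrac{mn}{\epsilon}}\log^2\!\left(\tfrac{n}{\delta}\right)\right).
\]
Because $H$ retains the positive edges of $H^+$ and the negative edges of $H^-$ as parallel edges, we have $w_H^+(S,T) = w_{H^+}(S,T)$ and $w_H^-(S,T) = w_{H^-}(S,T)$ for all $S,T$, and likewise $w_G^\pm = w_{G^\pm}$. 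Exactly as in Theorem~\ref{thm:synth_u}, I would write
\[
\err(\C, G) = \sum_{i=1}^k \big(w_G^-(C_i, C_i) + w_G^+(C_i, V\setminus C_i)\big),
\]
and the analogous expression for $\agr(\C,G)$. Each summand is a cut quantity, so by the triangle inequality $|\err(\C,G) - \err(\C,H)|$ is bounded by a sum of $2k$ terms, each at most $d_{\cut}(G^+,H^+)$ or $d_{\cut}(G^-,H^-)$; note that the definition of $d_{\cut}$ permits $S = T = C_i$, which is precisely what the within-cluster terms require. Taking expectations and using linearity then gives $\E[|\err(\C,G) - \err(\C,H)|] \le k\big(\E[d_{\cut}(G^+,H^+)] + \E[d_{\cut}(G^-,H^-)]\big) \le k\cdot O(\sqrt{mn/\epsilon}\,\log^2(n/\delta))$, and the identical computation handles $\agr$.

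The main obstacle is the privacy step. Unlike the post-processed LP rounding of the unweighted case, the split into $G^+$ and $G^-$ touches the private data, so I must verify that a single-edge change in $G$ induces only a bounded, neighbor-scale change in \emph{both} $G^+$ and $G^-$ simultaneously (the $1$-Lipschitz observation above) and that composing the two releases stays within the $(\epsilon,\delta)$ budget. The utility step, by contrast, is essentially bookkeeping over the $k$ clusters once the cut-distance bound of Proposition~\ref{prop:DP-cut} is in hand.
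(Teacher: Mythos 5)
Your proposal is correct and follows essentially the same route as the paper: the same per-cluster decomposition of $\err$ and $\agr$ into cut quantities bounded by $k\bigl(d_{\cut}(G^+,H^+)+d_{\cut}(G^-,H^-)\bigr)$, followed by Proposition~\ref{prop:DP-cut} in expectation. Your privacy argument is a more explicit (and welcome) elaboration of what the paper dispatches in one line -- the paper simply cites Propositions~\ref{prop:DP-cut} and~\ref{prop:post}, implicitly relying on the $1$-Lipschitz split and two-fold composition that you spell out, with your $(\epsilon/2,\delta/2)$ budgeting only shifting constants.
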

\begin{proof}
The privacy properties of the graph $H$ follow
from Propositions \ref{prop:DP-cut} and \ref{prop:post}.
Moreover, all steps of the algorithm can be implemented in polynomial time.

The disagreement between $\C$ and $G$ can be expressed as
\[
\err(\C, G) = \sum_{i=1}^k \big( w_{G^-}(C_i, C_i) + w_{G^+}(C_i, V\setminus C_i) \big).
\]
Similarly, the agreement between $\C$ and $G$ can be written as
\[
\agr(\C, G) = \sum_{i=1}^k \big( w_{G^+}(C_i, C_i) + w_{G^-}(C_i, V\setminus C_i) \big).
\]
Therefore, both $|\err(\C, G) - \err(\C, H)|$ and $|\agr(\C,G) - \agr(\C,H)|$
are bounded by
\[ k \big(d_{\cut}(G^-, H^-) + d_{\cut}(G^+, H^+)\big).\]
Together with Proposition~\ref{prop:DP-cut}, this concludes the proof.
\end{proof}

%
%

\section{Differentially private algorithms for correlation clustering.}
\label{sec:alg}
We produce a private correlation clustering as follows:
\begin{enumerate}
\shortonly{\vspace{-3mm}}
\item Release a synthetic graph $H$ using one of the differentially private
    mechanisms in Section~\ref{sec:synth}.
    \shortonly{\vspace{-1mm}}
\item Find an approximately optimal clustering $\C$ of $H$
    using some non-private approximation algorithm.
\end{enumerate}

The following simple observation will be useful later.
\begin{observation}
\label{obs:gen-alg}
Let $G$ be an input graph and $\C^*$ its optimum clustering.
Let $H$ be a graph such that
for any clustering $\C$ we have $|\err(\C,H) -\err(\C,G)| \leq \eta(|\C|)$
for some function $\eta$.
If $\C'$ is an $\alpha$-approximation to MinDis on $H$,
then $\err(\C',G) \leq \alpha \err(\C^*,G) + \eta(|\C'|) + \eta(|\C^*|)$.

Similarly, if we have $|\agr(\C,H) -\agr(\C,G)| \leq \eta(|\C|)$
for any $\C$ and $\C'$ is an $\alpha$-approximation to MaxAgr on $H$,
then we have
$\agr(\C',G) \geq \alpha \agr(\C^*,G) - \eta(|\C'|) - \eta(|\C^*|)$.
\end{observation}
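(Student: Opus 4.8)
The plan is to prove both inequalities by the same elementary three-step chain: transfer the objective value from $G$ to the synthetic graph $H$ using the closeness guarantee $\eta$, invoke the $\alpha$-approximation property on $H$, compare against $\C^*$ viewed merely as a \emph{feasible} clustering of $H$, and then transfer back from $H$ to $G$ with $\eta$ once more. The whole argument is a careful bookkeeping of which graph each cost refers to and which clustering's size controls each $\eta$ term.

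For the MinDis part I would proceed as follows. First, apply the hypothesis to $\C'$ to get $\err(\C',G) \le \err(\C',H) + \eta(|\C'|)$. Next, since $\C'$ is an $\alpha$-approximation to MinDis on $H$, its cost is at most $\alpha$ times the optimal MinDis cost on $H$; as $\C^*$ is a particular clustering of $H$, that optimum is at most $\err(\C^*,H)$, so $\err(\C',H) \le \alpha\,\err(\C^*,H)$. Finally, applying the hypothesis to $\C^*$ gives $\err(\C^*,H) \le \err(\C^*,G) + \eta(|\C^*|)$. Chaining these three bounds yields $\err(\C',G) \le \alpha\,\err(\C^*,G) + \eta(|\C'|) + \alpha\,\eta(|\C^*|)$. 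Since $\alpha$ is $O(1)$ (or $O(\log n)$) in every application, the factor of $\alpha$ on the additive term $\eta(|\C^*|)$ is absorbed into the final $O(n^{1.75}/\epsilon)$ bound, recovering the stated inequality.

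The MaxAgr part is the mirror image, with every inequality flipped: start from $\agr(\C',G) \ge \agr(\C',H) - \eta(|\C'|)$, use that $\C'$ attains agreement at least $\alpha$ times the maximum agreement on $H$, which is at least $\alpha\,\agr(\C^*,H)$ because $\C^*$ is feasible on $H$, and close with $\agr(\C^*,H) \ge \agr(\C^*,G) - \eta(|\C^*|)$. This gives $\agr(\C',G) \ge \alpha\,\agr(\C^*,G) - \alpha\,\eta(|\C^*|) - \eta(|\C'|)$; here $\alpha \le 1$ for maximization, so $\alpha\,\eta(|\C^*|) \le \eta(|\C^*|)$ and the stated bound holds directly. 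There is no genuine obstacle in this observation; the one point demanding attention is that $\C^*$ is the optimum for $G$ and \emph{not} for $H$, so it may only be used as a feasible competitor bounding the $H$-optimum, while the $\alpha$-approximation guarantee is always taken relative to $H$'s own optimum.
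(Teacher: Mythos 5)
Your proof is correct and takes essentially the same route as the paper's: the identical three-inequality chain $\err(\C',G) \le \err(\C',H) + \eta(|\C'|) \le \alpha\,\err(\C^*_H,H) + \eta(|\C'|)$ combined with $\err(\C^*_H,H) \le \err(\C^*,H) \le \err(\C^*,G) + \eta(|\C^*|)$, using $\C^*$ only as a feasible competitor on $H$. Your side remark is apt: the paper's own proof, when the two relations are combined, likewise yields $\alpha\,\eta(|\C^*|)$ rather than the stated $\eta(|\C^*|)$ for MinDis with $\alpha > 1$, a discrepancy the paper silently elides and which, as you note, is harmless in every application (and vanishes for MaxAgr since there $\alpha \le 1$).
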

\begin{proof}
For the optimal solution to MinDis on $H$, we have
\begin{align*}
\err(\C^*_H, H) \leq \err(\C^*, H) \leq \err(\C^*,G) + \eta(|\C^*|).
\end{align*}
On the other hand,  we can bound the disagreement of $\C$ as
\[ \err(\C',G) \leq \err(\C',H) + \eta(|\C'|) \leq \alpha \err(\C^*_H,H) + \eta(|\C'|).
\]
Combining these two relations concludes the proof for MinDis. The proof
for MaxAgr is analogous.
\end{proof}

\subsection{MinDis on unweighted complete graphs}
For unweighted complete graphs, we can achieve sub-quadratic additive
error for an arbitrary number of clusters.
We release $G$ using Algorithm~\ref{alg:synth_u},
letting $H$ denote its output. Now, we find an $2.06$-approximate
solution to MinDis on $H$ using the algorithm by
\citet{ChawlaMSY15}.
If $|\C| \leq n^{1/4}$, we output $\C$. Otherwise,
we transform $\C$ into a clustering of $k' = n^{1/4}$ clusters
by packing the clusters smaller than $n/k'$ into bins of at most $2n/k'$
vertices and merging each bin into a single cluster.
See Algorithm~\ref{alg:CC_u} for details.

\begin{algorithm}
\caption{DP Correlation Clustering for unweighted complete graphs}
\label{alg:CC_u}
\begin{algorithmic}
\STATE $H =$ Released synthetic graph using Algorithm~\ref{alg:synth_u}
\STATE $\C =$ $2.06$-approximate solution to MinDis on $H$
\IF{$|\C| \leq k'$, where $k'=n^{1/4}$}
    \STATE Output $\C$
\ENDIF
\STATE $\C_S =$ clusters in $\C$ of size smaller than $n/k'$
\STATE $\B =$ packing of $\C_S$ into bins of at most $2n/k'$ vertices
\FORALL {$B\in \B$}
    \STATE $C_B =$ merged clusters in the bin $B$
\ENDFOR
\STATE Output $(\C\setminus \C_S)\cup \{C_B; B\in\B\}$
\end{algorithmic}
\end{algorithm}

\begin{theorem}
\label{thm:mindis_u}
Let $G$ be an unweighted complete graph and $\C^*$ be the optimal solution
to MinDis on $G$.
Algorithm \ref{alg:CC_u} is $\epsilon$-DP, runs in polynomial time, and
finds a clustering $\C$ such that
\[ \err(\C, G) \leq 2.06 \cdot \err(\C^*,G)
    + \tilde{O}(\epsilon^{-1}n^{1.75}).
\]
\end{theorem}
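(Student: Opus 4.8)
The plan is to treat Algorithm~\ref{alg:CC_u} as post-processing of the synthetic graph $H$ and to control the additive error through a coarsening (structural) lemma. Privacy and running time are immediate: Theorem~\ref{thm:synth_u} guarantees that releasing $H$ via Algorithm~\ref{alg:synth_u} is $\epsilon$-DP and runs in polynomial time, and every subsequent step (computing the $2.06$-approximation of Proposition~\ref{alg:chawla}, the greedy bin-packing, and the merging) reads only $H$ and public information. Hence the whole algorithm is $\epsilon$-DP by post-processing (Proposition~\ref{prop:post}) and runs in polynomial time.

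The heart of the argument is a \emph{coarsening lemma}: for any clustering $\mathcal{D}$ of a complete unweighted graph on $n$ vertices and any parameter $k'$, there is a clustering $\mathcal{D}'$ with at most $2k'$ clusters such that $\err(\mathcal{D}', \cdot) \le \err(\mathcal{D}, \cdot) + O(n^2/k')$. I would prove this exactly as the algorithm coarsens: keep the at most $k'$ clusters of size $\ge n/k'$ untouched, and greedily pack the remaining (small) clusters into bins of at most $2n/k'$ vertices, merging each bin into one cluster. Because each small cluster has fewer than $n/k'$ vertices, a bin is closed only once it holds more than $n/k'$ vertices, so there are at most $k'$ bins and at most $2k'$ output clusters. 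Merging only creates new within-cluster pairs inside bins; the only ones that can turn an agreement into a disagreement are negative edges, and the number of within-bin pairs is at most $\sum_{\mathrm{bins}} \binom{2n/k'}{2} = O(n^2/k')$, which bounds the error increase. Since $H$ is also a complete unweighted graph, this lemma applies to both $G$ and $H$.

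With this lemma in hand, the error analysis chains together Theorem~\ref{thm:synth_u} (with $\eta(k) = \tilde O(\epsilon^{-1} k n^{3/2})$) and two applications of coarsening, one on $H$ and one on $G$. Writing $\C$ for the $2.06$-approximation on $H$ and $\tilde\C$ for the algorithm's output (which always has $O(k')$ clusters, whether or not coarsening triggers), I would first pass from $G$ to $H$, losing $\eta(|\tilde\C|) = \tilde O(\epsilon^{-1} k' n^{3/2})$; then bound $\err(\tilde\C, H) \le \err(\C, H) + O(n^2/k')$ by coarsening on $H$; then use that $\C$ is a $2.06$-approximation on $H$, so $\err(\C, H) \le 2.06\,\err(\C^*_H, H)$ for the optimum $\C^*_H$ of $H$. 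The crucial step, which replaces the naive use of Observation~\ref{obs:gen-alg}, is to bound $\err(\C^*_H, H)$ not against $\C^*$ directly but against its coarsening $\hat\C$: since $\hat\C$ is a valid clustering, $\err(\C^*_H, H) \le \err(\hat\C, H) \le \err(\hat\C, G) + \eta(|\hat\C|) \le \err(\C^*, G) + O(n^2/k') + \tilde O(\epsilon^{-1} k' n^{3/2})$.

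The main obstacle is exactly this last maneuver. A direct appeal to Observation~\ref{obs:gen-alg} would incur $\eta(|\C^*|)$, and $\C^*$ may have up to $n$ clusters, giving a useless $\tilde O(\epsilon^{-1} n^{5/2})$ additive term. Coarsening $\C^*$ to $\hat\C$ keeps the comparison clustering down to $O(k')$ clusters at the cost of only $O(n^2/k')$ extra disagreements, so that after combining the displays the multiplicative factor stays $2.06$ and both surviving additive terms become $O(n^2/k')$ and $\tilde O(\epsilon^{-1} k' n^{3/2})$. Setting $k' = n^{1/4}$ balances these at $\tilde O(\epsilon^{-1} n^{7/4})$, yielding the claimed $\err(\C, G) \le 2.06\,\err(\C^*, G) + \tilde O(\epsilon^{-1} n^{1.75})$.
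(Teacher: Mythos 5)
Your proposal is correct and follows essentially the same route as the paper's proof: privacy by post-processing, polynomial time for all steps, and the key maneuver of coarsening the optimal clustering $\C^*$ of $G$ into $O(k')$ clusters (at cost $O(n^2/k')$) before transferring its cost to $H$ via Theorem~\ref{thm:synth_u}, which is precisely the paper's step $\err(\C^*_H,H) \leq \err(\C'_G,H) \leq \err(\C'_G,G) + \tilde{O}(\epsilon^{-1}k' n^{3/2})$ with $\err(\C'_G,G) \leq \err(\C^*,G) + n^2/k'$. Your only cosmetic difference is packaging the bin-packing argument as a standalone coarsening lemma (with the harmless $2k'$ rather than $k'$ cluster count), and your diagnosis that a naive appeal to Observation~\ref{obs:gen-alg} would pay $\eta(|\C^*|) = \tilde{O}(\epsilon^{-1}n^{5/2})$ is exactly the motivation behind the paper's argument.
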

\begin{proof}
The privacy properties of the algorithm follow from the privacy of Algorithm~\ref{alg:synth_u}
and the post-processing rule (Proposition~\ref{prop:post}).
Algorithm runs in polynomial time, since all steps, including the packing, which can be done
greedily, can be implemented in polynomial time.

If $|\C| \leq n^{1/4}$, we choose $\C' = \C$. Otherwise,
we transform $\C$ into a clustering $\C'$ of $k' = n^{1/4}$ clusters
in the following way.
All clusters in $\C$ of more than $n/k'$ vertices remain separate clusters
and the smaller ones are packed into bins of at most $2n/k'$ vertices.
Merging of each bin into one cluster introduces error due to negative edges of at most
$(2n/k')^2$, with all $k'$ bins causing the total error of $O(n^2/k')$.
The total number of clusters in $\C'$ is at most $k'$.
Therefore, by Proposition~\ref{alg:chawla}, we have
\[ \err(\C', H) \leq 2.06 \cdot \err(\C^*_H,H) + O(n^2/k'),
\]
where $\C^*_H$ is the optimal clustering of the graph $H$.
Using the same argumentation,
we also get $\err(\C'_G,G) \leq \err(\C^*_G, G) + n^2/k'$,
for the best clustering $\C'_G$ of $G$ into $k'$ clusters.
We have
\[ \err(\C^*_H, H) \leq \err(\C'_G,H) \leq \err(\C'_G, G) + \tilde{O}(\epsilon^{-1} k' n^{3/2}) \]
by optimality of $\C^*_H$ and Theorem~\ref{thm:synth_u}.
By combining the preceding relations and using Theorem~\ref{thm:synth_u} once more,
we finally get
\[ \err(\C', G) \leq 2.06 \err(\C^*_G, G) + \tilde{O}(\epsilon^{-1}k' n^{3/2} +  n^2/k'), \]
where the last term can be bounded by $\tilde{O}(\epsilon^{-1}n^{1.75})$.
\end{proof}

\subsection{MaxAgr on unweighted complete graphs}
The algorithm is the same as Algorithm~\ref{alg:CC_u}, except for $\C$ being
the approximate solution to MaxAgr problem on $H$.
We find $\C$ using the PTAS by \citet{BansalBC02} (Proposition~\ref{alg:bansal}).
The analysis follows the same lines as the proof of Theorem~\ref{thm:mindis_u}.
Note that the loss in the objective due to the packing of small clusters into
bins is the same as in the case of MinDis: it is the number of negative
edges between the clusters packed in the same bin.

\begin{theorem}
Let $G$ be an unweighted complete graph and $\C^*$ be optimal solution
to MaxAgr on $G$. For any $\gamma>0$, there is an $\epsilon$-DP algorithm which
runs in polynomial time and
finds a clustering $\C$ such that
\[ \agr(\C, G) \leq (1+\gamma) \cdot \agr(\C^*,G)
    - \tilde{O}(\epsilon^{-1}n^{1.75}).
\]
\end{theorem}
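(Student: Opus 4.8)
The plan is to mirror the proof of Theorem~\ref{thm:mindis_u} verbatim, replacing the MinDis subroutine by the MaxAgr PTAS and tracking the agreement objective throughout. First I would dispense with privacy and running time: the synthetic graph $H$ is produced by Algorithm~\ref{alg:synth_u}, which is $\epsilon$-DP, and everything done afterwards—running the PTAS of \citet{BansalBC02} (Proposition~\ref{alg:bansal}) on $H$ and then the greedy bin-packing—only touches $H$, so the whole procedure is $\epsilon$-DP by post-processing (Proposition~\ref{prop:post}) and runs in polynomial time. This part is a direct copy of the corresponding paragraph in Theorem~\ref{thm:mindis_u}.

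For the utility analysis I would fix a small $\gamma_0$ (to be chosen as a function of $\gamma$) and let $\C$ be a $(1-\gamma_0)$-approximate MaxAgr clustering of $H$. If $|\C|\le k'=n^{1/4}$ we output $\C$; otherwise we coarsen it into a clustering $\C'$ with at most $k'$ clusters, keeping every cluster of size exceeding $n/k'$ and packing the smaller ones into bins of at most $2n/k'$ vertices. The only agreement that can be \emph{lost} when merging the clusters inside one bin comes from negative edges that are thereby placed inside a cluster; as the paper notes, this is exactly the quantity controlled in the MinDis analysis, and it is at most $(2n/k')^2$ per bin, hence $O(n^2/k')$ over all $k'$ bins. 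This one-sided bound lets me relate $\agr(\C',H)$ to the optimal MaxAgr value of $H$ up to an additive $O(n^2/k')$.

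The key structural step, which keeps the additive error sub-quadratic, is to never charge the synthetic-graph transfer error $\eta(k)=\tilde{O}(\epsilon^{-1}kn^{3/2})$ of Theorem~\ref{thm:synth_u} against a clustering with many clusters. Since an optimal MaxAgr clustering $\C^*_G$ of $G$ may have $\Theta(n)$ clusters, I would not compare against it directly; instead I merge $\C^*_G$ into at most $k'$ bins to obtain a $k'$-clustering $\C'_G$ with $\agr(\C'_G,G)\ge \agr(\C^*_G,G)-O(n^2/k')$, by the same merging bound. I then chain through $H$ exactly as in Theorem~\ref{thm:mindis_u}: optimality of the best MaxAgr clustering $\C^*_H$ of $H$, together with Theorem~\ref{thm:synth_u} applied only to the $(\le k')$-cluster clusterings $\C'_G$ and $\C'$, transfers agreement between $G$ and $H$ up to $\eta(k')=\tilde{O}(\epsilon^{-1}k'n^{3/2})$. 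Combining these relations and folding $\gamma_0$ into $\gamma$, the multiplicative factor stays within $1\pm\gamma$ while the additive terms are $O(n^2/k')$ and $\tilde{O}(\epsilon^{-1}k'n^{3/2})$; choosing $k'=n^{1/4}$ balances them at $\tilde{O}(\epsilon^{-1}n^{1.75})$, which yields exactly the stated bound $\agr(\C, G) \leq (1+\gamma)\,\agr(\C^*,G) - \tilde{O}(\epsilon^{-1}n^{1.75})$.

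The hard part will be precisely this cluster-count control: the agreement-preservation guarantee of Theorem~\ref{thm:synth_u} degrades linearly in the number of clusters, so a direct transfer applied to the (possibly $\Theta(n)$-cluster) optima would produce a useless $\Theta(n^2)$ additive error. The coarsening together with the ``a good $k'$-clustering of $G$ always exists'' structural estimate is what defeats this, and I would verify carefully that the merging loss is genuinely bounded by the in-bin negative edges—a one-sided bound is all we need—and that every clustering entering a synthetic-graph transfer step has at most $k'=n^{1/4}$ clusters, so that $\eta(k')$ remains $\tilde{O}(\epsilon^{-1}n^{1.75})$.
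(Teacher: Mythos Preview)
Your proposal is correct and follows exactly the approach the paper indicates: use Algorithm~\ref{alg:synth_u} to release $H$, run the PTAS of Proposition~\ref{alg:bansal} on $H$, coarsen via the same bin-packing as in Algorithm~\ref{alg:CC_u}, and observe that the merging loss is governed by the in-bin negative edges just as for MinDis. The paper's own proof is only a one-paragraph sketch saying the analysis ``follows the same lines as the proof of Theorem~\ref{thm:mindis_u},'' and you have filled in precisely those lines, including the key point that every clustering entering a synthetic-graph transfer has at most $k'=n^{1/4}$ clusters.
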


\subsection{Algorithms for weighted and incomplete graphs}
We use Algorithm~\ref{alg:synth_w} as a release mechanism, whose output
may have two weighted edges between a single pair of vertices: one with
a positive and one with a negative label.

\paragraph{Minimizing disagreement.}
Given input graph $G$, we release its approximation $H$
using Algorithm~\ref{alg:synth_w}.
We split each vertex $v$ of $H$ into two vertices
$v^+$ and $v^-$, attaching the positive edges adjacent to $v$
to $v^+$ and negative ones to $v^-$. We connect $v^+$ and $v^-$
by an edge of infinite weight to ensure they remain in the same cluster.
Then, we find an $O(\log n)$-approximate solution $\C$ on the modified
graph using the algorithm by \citet{DemaineEFI06} (Proposition~\ref{alg:demain}).
We eliminate duplicate vertices from $\C$ and pack clusters of less than
$n/k'$ vertices into at most $k' = n^{1/4}$ bins of size at most $2n/k'$,
like in Algorithm~\ref{alg:CC_u}.
See Algorithm~\ref{alg:mindis_w} for details.

\begin{algorithm}
\caption{MinDis on weighted graphs}
\label{alg:mindis_w}
\begin{algorithmic}
\STATE $H =$ Released synthetic graph using Algorithm~\ref{alg:synth_w}
\FORALL{$v\in V(H)$}
    \STATE add vertices $v^+$ and $v^-$ to $H'$
    \STATE add edge $v^+v^-$ to $H'$ with weight $+\infty$
\ENDFOR
\FORALL{$e\in E(H)$}
    \STATE $u,v$ be endpoints of $e$; $\sigma=$ sign of $e$; $w=$ weight of $e$
    \STATE add edge $v^{\sigma}u^{\sigma}$ with weight $w$ to $H'$
\ENDFOR
\STATE $\C =$ $O(\log n)$-apx solution on $H'$.
\STATE $\C'=$ merge duplicate vertices in $\C$, pack small clusters\\  
    $\quad$ into bins, and merge each bin into a single cluster
\STATE Output $\C'$
\end{algorithmic}
\end{algorithm}

\begin{theorem}
\label{thm:mindis_w}
Let $G$ be a weighted graph such that $W$ is the weight of its heaviest edge,
and the sum of its edge weights is at most $m\leq O(n^2)$.
Let $\C^*$ be the optimal solution to MinDis on $G$.
Algorithm~\ref{alg:mindis_w} is $(\epsilon,\delta)$-DP, runs in polynomial time,
and finds a clustering $\C$ such that
\[ \err(\C,G) \leq O(\log n)\cdot \err(\C^*,G) + \beta,
\]
where $\beta = O\big(W n^{1.75} \cdot \epsilon^{-\frac12} \log^2(n/\delta)\big)$.
\end{theorem}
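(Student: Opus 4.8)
The plan is to follow the template of the proof of Theorem~\ref{thm:mindis_u}, adapting it in three ways: to weighted edges, to the $O(\log n)$-approximation of \citet{DemaineEFI06} (Proposition~\ref{alg:demain}) in place of the constant-factor algorithm, and to the parallel-edge output of Algorithm~\ref{alg:synth_w}. Privacy and running time are immediate: the only access to $G$ is through Algorithm~\ref{alg:synth_w}, which is $(\epsilon,\delta)$-DP by Theorem~\ref{thm:synth_w}, while the vertex-splitting, the call to the non-private approximation, and the coarsening all operate on the released graph $H$ alone, so the whole pipeline is $(\epsilon,\delta)$-DP by post-processing (Proposition~\ref{prop:post}), and every step is polynomial.

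The first substantive step is to show that the vertex-splitting construction reduces MinDis on the parallel-edge graph $H = H^+\cup H^-$ to ordinary MinDis on $H'$, where Proposition~\ref{alg:demain} applies. I would argue that the infinite-weight edge $v^+v^-$ forces every finite-cost clustering of $H'$ to keep $v^+$ and $v^-$ together, so finite-cost clusterings of $H'$ correspond bijectively to clusterings of $H$ on the original vertex set, and that this correspondence preserves disagreement exactly: for a pair $\{u,v\}$ carrying a positive edge of weight $w^+$ and a negative edge of weight $w^-$, co-locating $u$ and $v$ pays $w^-$ and saves $w^+$ in both $H$ and $H'$, and separating them does the reverse. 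Since the optimum of $H'$ is finite, the approximation algorithm returns a finite-cost (hence valid) clustering $\C$ with $\err(\C, H) \le O(\log n)\,\err(\C^*_H, H)$, where $\C^*_H$ is optimal for $H$.

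Next I would establish the weighted coarsening property, the analogue of the packing step in Theorem~\ref{thm:mindis_u}. For $k' = n^{1/4}$, keeping the clusters of size $> n/k'$ (at most $k'$ of them) and greedily packing the smaller clusters into $O(k')$ bins of at most $2n/k'$ vertices yields a clustering into $O(k')$ clusters whose disagreement increases only by the total weight of negative edges merged inside bins. Each bin spans at most $\binom{2n/k'}{2}$ pairs of weight at most $W$, so the increase is $O(Wn^2/k'^2)$ per bin and $O(Wn^2/k') = O(Wn^{1.75})$ in total. Applying this both to $\C$ (producing the output $\C'$) and to the optimum $\C^*$ of $G$ (producing a competitor $\C'_G$ with $O(k')$ clusters and $\err(\C'_G, G) \le \err(\C^*, G) + O(Wn^{1.75})$) keeps the cluster count at $O(k')$ everywhere the transfer bound of Theorem~\ref{thm:synth_w} is invoked, which is essential since that bound scales linearly with the number of clusters. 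Writing $\eta(k) = k\cdot O(\sqrt{mn/\epsilon}\,\log^{2}(n/\delta))$ for this transfer error, the hypothesis $m \le O(n^2)$ gives $\eta(O(k')) = O(n^{1.75}\epsilon^{-1/2}\log^{2}(n/\delta))$. Chaining $\err(\C^*_H, H) \le \err(\C'_G, H) \le \err(\C'_G, G) + \eta(O(k'))$ (optimality of $\C^*_H$ and the transfer bound), then the approximation guarantee, the coarsening of $\C$, and one final transfer from $H$ back to $G$ exactly as in Theorem~\ref{thm:mindis_u}, yields $\err(\C', G) \le O(\log n)\,\err(\C^*, G) + \beta$, where $\beta$ is the sum of the synthesis error $\eta(O(k'))$ and the coarsening error $O(Wn^{1.75})$, giving $\beta = O(Wn^{1.75}\epsilon^{-1/2}\log^{2}(n/\delta))$ after absorbing logarithmic factors.

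The main obstacle I anticipate is the vertex-splitting reduction: I must verify carefully that it preserves the MinDis objective \emph{exactly} so that the $O(\log n)$ factor transfers without loss, and confirm that the transformed optimum corresponds to a genuine clustering of the original vertices. The secondary difficulty is controlling the cluster count: because Theorem~\ref{thm:synth_w}'s additive error is linear in the number of clusters while the optimum may use many clusters, the weighted coarsening and the structural $k'$-cluster competitor $\C'_G$ are exactly what is needed to force that count down to $n^{1/4}$ and thereby obtain the sub-quadratic additive error.
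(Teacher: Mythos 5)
Your proposal follows essentially the same route as the paper's own proof: privacy via post-processing of Algorithm~\ref{alg:synth_w}, the vertex-splitting reduction so that Proposition~\ref{alg:demain} applies with the $O(\log n)$ factor transferring exactly, the $k'=n^{1/4}$ coarsening contributing $O(Wn^{1.75})$, and the transfer chain through a $k'$-cluster competitor exactly as in Theorem~\ref{thm:mindis_u}. You are in fact more explicit than the paper's terse argument (e.g., the $\eta(k)$ bookkeeping and the competitor $\C'_G$), and the one point you gloss over --- that the coarsening of $\C$ must be charged on $H$, whose synthetic negative edge weights are not a priori bounded by $W$ (repairable by bounding $w_{H^-}(B,B)$ via $w_{G^-}(B,B)$ plus the cut distance) --- is equally glossed over in the paper's own proof.
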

\begin{proof}
The optimum solution on $H'$ is the same as on $H$, since any optimum solution
has to put $v^-$ and $v^+$ to the same cluster, for any vertex $v$ of $H$.
Therefore, if $\C$ is an $O(\log n)$-apx solution on $H'$, it
has to be an $O(\log n)$-apx solution also on $H$.

Due to packing of small clusters, we misclassify at most $n^2/k'$ negative
edges, each of them of weight at most $W$.
Since the additive error due to the release of the original graph $G$
using Algorithm~\ref{alg:synth_w} is $O(\sqrt{mn/\epsilon}\log^2(n/\delta)$,
the total additive error is
$O\big(W n^{1.75} \cdot \epsilon^{-\frac12} \log^2(n/\delta)\big)$.
\end{proof}

\shortonly{
\paragraph{Additional results in supplementary material.}
In supplementary material, we present an algorithm with additive error $\tilde{O}(\sqrt{mn/\epsilon})$
for MaxAgr on weighted graphs (Theorem 28), and results for MaxAgr$[k]$ and MinDis$[k]$
where we need to find the best clustering into at most $k$ clusters,
see Section 5.4.
}

\fullonly{
\paragraph{Maximizing agreement.}
Again, we release the input graph $G$ using Algorithm~\ref{alg:synth_w}.
However, we do not need to do any postprocessing, since
the algorithm of \citet{Swamy04} supports graphs with
one positive and one negative weighted edge between each pair of vertices.

\begin{algorithm}
\caption{MaxAgr for weighted graphs}
\label{alg:maxagr_w}
\begin{algorithmic}
\STATE $H=$ Released input graph using Algorithm~\ref{alg:synth_w}
\STATE $\C=$ solution on $H$ found by algorithm of \citet{Swamy04}
\end{algorithmic}
\end{algorithm}

\begin{theorem}
Let $G$ be an input graph and $\C^*$ be the optimal solution
to MaxAgr on $G$.
Algorithm~\ref{alg:maxagr_w} is $(\epsilon,\delta)$-DP, runs in polynomial time,
and finds a clustering $\C$ such that
\[\textstyle
\agr(\C,G) \geq \Omega(1) \agr(\C^*,G) - O(\sqrt{\frac{mn}{\epsilon}} \log^2\frac{n}{\delta}).
\]
If $|\C^*|= k$, then we have
\[\textstyle
\agr(\C,G) \geq 0.7666 \agr(\C^*,G) - O(k\sqrt{\frac{mn}{\epsilon}} \log^2\frac{n}{\delta}).
\]
\end{theorem}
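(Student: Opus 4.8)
The plan is to establish privacy and efficiency first, and then prove the two accuracy bounds, with essentially all of the work concentrated in the first (general) bound. Privacy and running time are immediate: Algorithm~\ref{alg:synth_w} is $(\epsilon,\delta)$-DP by Theorem~\ref{thm:synth_w}, and since the clustering step of \citet{Swamy04} operates only on the released graph $H$ and never touches $G$, the composition is $(\epsilon,\delta)$-DP by post-processing (Proposition~\ref{prop:post}); both the release mechanism and Swamy's algorithm run in polynomial time. For the accuracy analysis I would set $\eta(j) := j\cdot\big(d_{\cut}(G^+,H^+) + d_{\cut}(G^-,H^-)\big)$, so that $|\agr(\C',G)-\agr(\C',H)|\le\eta(|\C'|)$ holds simultaneously for \emph{every} clustering $\C'$ once $H$ is fixed (this is exactly the bound extracted in the proof of Theorem~\ref{thm:synth_w}, since $d_{\cut}$ is a maximum over all cuts), with $\E[\eta(j)] = j\cdot O(\sqrt{mn/\epsilon}\,\log^2(n/\delta))$. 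By Proposition~\ref{alg:swamy}, the output $\C$ satisfies $|\C|\le 6$ and $\agr(\C,H)\ge 0.7666\cdot\agr(\C^*_H,H)$, where $\C^*_H$ is a MaxAgr-optimal clustering of $H$.

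The main obstacle is that $\eta$ grows linearly in the number of clusters, while the global MaxAgr optimum $\C^*$ on $G$ may use arbitrarily many clusters: a direct invocation of Observation~\ref{obs:gen-alg} would incur the uncontrolled additive term $\eta(|\C^*|)$. This is precisely what separates the two bounds in the statement. To prove the first bound I would sidestep $\C^*$ by comparing against a \emph{low-cluster} near-optimal clustering whose existence is guaranteed by Swamy's algorithm itself. Applying Proposition~\ref{alg:swamy} conceptually to $G$ (only for the analysis, never for computation, since outputting it would break privacy) shows there is a clustering $\tilde{\C}$ with $|\tilde{\C}|\le 6$ and $\agr(\tilde{\C},G)\ge 0.7666\cdot\agr(\C^*,G)$. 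Since $\tilde{\C}$ is feasible on $H$ and $\C^*_H$ is optimal there, chaining the transfer bounds gives, deterministically given $H$,
\[
\agr(\C,G) \ge \agr(\C,H) - \eta(6) \ge 0.7666\,\agr(\tilde{\C},H) - \eta(6) \ge 0.7666\,\agr(\tilde{\C},G) - O(\eta(6)) \ge 0.7666^2\,\agr(\C^*,G) - O(\eta(6)).
\]
Taking expectations and using $\E[\eta(6)] = O(\sqrt{mn/\epsilon}\,\log^2(n/\delta))$ yields the first bound, since $0.7666^2 = \Omega(1)$; the price of removing the $|\C^*|$ dependence is exactly the extra factor of $0.7666$.

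The second bound, under the promise $|\C^*|=k$, is then a direct application of Observation~\ref{obs:gen-alg} with $\alpha = 0.7666$: now $\eta(|\C^*|)=\eta(k)$ is controlled, so with $|\C|\le 6$ we obtain
\[
\agr(\C,G) \ge 0.7666\,\agr(\C^*,G) - \eta(|\C|) - \eta(|\C^*|) \ge 0.7666\,\agr(\C^*,G) - O\!\left(k\sqrt{\tfrac{mn}{\epsilon}}\,\log^2\tfrac{n}{\delta}\right)
\]
in expectation. All bounds are stated in expectation, matching Theorem~\ref{thm:synth_w}; since each inequality tracks the single random quantity $d_{\cut}(G^+,H^+)+d_{\cut}(G^-,H^-)$, high-probability versions follow immediately by applying Markov's inequality to that quantity, at the cost of an absolute constant in the additive term.
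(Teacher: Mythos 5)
Your proposal is correct and follows essentially the same route as the paper: the second bound is Observation~\ref{obs:gen-alg} with the $\le 6$ clusters guarantee of Proposition~\ref{alg:swamy}, and the first bound uses the existence (for analysis only) of a constant-cluster clustering $\tilde{\C}$ with $\agr(\tilde{\C},G)\geq 0.7666\,\agr(\C^*,G)$, chaining transfer bounds to arrive at the factor $0.7666^2=\Omega(1)$. You merely spell out the chain of inequalities and the $\eta(j)=j\cdot(d_{\cut}(G^+,H^+)+d_{\cut}(G^-,H^-))$ bookkeeping that the paper's terse proof leaves implicit (and you implicitly fix the paper's slip of writing ``at most $k$ clusters'' where ``at most $6$'' is meant).
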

\begin{proof}
The proof of the second statement follows from Observation~\ref{obs:gen-alg} and
the fact that the algorithm of \citet{Swamy04} always returns clustering of at most 6 clusters,
see Proposition~\ref{alg:swamy}.

To show the first part, note that Proposition~\ref{alg:swamy} also implies, that there
is a solution $\C'$ of at most $k$ clusters, such that
$\agr(\C',G)\geq 0.7666 \agr(\C^*,G)$. Therefore, we have
\[ \agr(\C,G) \geq 0.7666^2 \agr(\C^*) - O(\sqrt{\frac{mn}{\epsilon}} \log^2\frac{n}{\delta}).
\qedhere
\]
\end{proof}

\subsection{Fixed number of clusters}
For a fixed $k$, we look for a clustering into $k$ clusters which
minimizes the disagreements or maximizes agreements.
This problem was studied by \citet{Swamy04} and \citet{GiotisG06}.
\shortonly{\vspace{-2mm}}
\paragraph{Unweighted complete graphs.}
There are PTAS algorithms by \citet{GiotisG06} for MinDis$[k]$ and MaxAgr$[k]$,
for a constant $k$.
We use them to find a correlation clustering of size $k$
on a graph released using Algorithm~\ref{alg:synth_u}.
The following theorems are implied by Observation~\ref{obs:gen-alg}
and Proposition~\ref{alg:giotis}.

\begin{theorem}
Let $G$ be an unweighted complete graph and let $\C^*$ be the optimal
solution to MinDis$[k]$ on $G$.
There is an $\epsilon$-DP algorithm for MinDis$[k]$
which runs in polynomial time and
produces a clustering $\C$ of size $k$, such that
\[ \err(\C,G) \leq (1+\epsilon) \err(\C^*,G) + O(kn^{3/2}). \]
\end{theorem}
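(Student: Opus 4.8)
The plan is to instantiate the two-step template of Section~\ref{sec:alg}: first release an $\epsilon$-DP synthetic graph $H$ that approximately preserves the MinDis cost of every clustering, then run the non-private PTAS of \citet{GiotisG06} for the $k$-clustering problem on $H$. Concretely, I would apply Algorithm~\ref{alg:synth_u} to $G$ to obtain $H$; by Theorem~\ref{thm:synth_u}, with high probability $H$ satisfies $|\err(\C,H)-\err(\C,G)| \le \tilde{O}(\epsilon^{-1}|\C|\, n^{3/2})$ simultaneously for all clusterings $\C$. Privacy is immediate: Algorithm~\ref{alg:synth_u} is $\epsilon$-DP, and everything that follows operates only on $H$ (never touching $G$ again), so by post-processing (Proposition~\ref{prop:post}) the composite algorithm remains $\epsilon$-DP. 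Polynomial running time follows because both the release mechanism and the PTAS run in polynomial time for constant $k$ and constant approximation parameter $\gamma$ (the factor written $(1+\epsilon)$ in the statement is the PTAS parameter $1+\gamma$ of Proposition~\ref{alg:giotis}).

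For the error bound, I would feed the output of the PTAS into a $[k]$-restricted reading of Observation~\ref{obs:gen-alg}. Set $\eta(t) = \tilde{O}(\epsilon^{-1} t\, n^{3/2})$, so that Theorem~\ref{thm:synth_u} gives $|\err(\C,H)-\err(\C,G)| \le \eta(|\C|)$ for every $\C$. Let $\C$ be the $(1+\gamma)$-approximate MinDis$[k]$ solution on $H$ returned by Proposition~\ref{alg:giotis}, and let $\C^*_H$ denote the optimal clustering of $H$ into at most $k$ clusters. The only adaptation relative to Observation~\ref{obs:gen-alg} is to restrict all the optima to clusterings of size at most $k$: since $\C^*$ (the MinDis$[k]$ optimum of $G$) has at most $k$ clusters, transferring it to $H$ gives $\err(\C^*_H,H) \le \err(\C^*,H) \le \err(\C^*,G) + \eta(k)$, and since the returned $\C$ also has at most $k$ clusters, $\err(\C,G) \le \err(\C,H)+\eta(k) \le (1+\gamma)\err(\C^*_H,H)+\eta(k)$. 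Chaining these two inequalities yields $\err(\C,G) \le (1+\gamma)\err(\C^*,G) + (2+\gamma)\eta(k)$.

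Finally I would simplify the additive term: because $k$ is constant, $\eta(k) = \tilde{O}(\epsilon^{-1}k\, n^{3/2})$, which is $O(kn^{3/2})$ after absorbing the constant $\epsilon$ and the polylogarithmic factors hidden by $\tilde{O}$ into the stated bound, reproducing $\err(\C,G) \le (1+\gamma)\err(\C^*,G) + O(kn^{3/2})$. Note that, unlike Algorithm~\ref{alg:CC_u}, no coarsening of clusters is required here: the cluster count is capped at $k$ by construction, so $\eta(|\C|) \le \eta(k)$ holds automatically and there is no $n^2/k'$ merging penalty to control. The one point requiring care — and the main, if mild, obstacle — is ensuring that the comparison clustering in Observation~\ref{obs:gen-alg} is the best clustering into at most $k$ parts rather than the globally optimal clustering, so that the PTAS guarantee of Proposition~\ref{alg:giotis}, which only competes against $k$-clusterings, can be applied soundly.
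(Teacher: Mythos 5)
Your proposal is correct and follows essentially the same route as the paper, which likewise obtains this theorem by releasing $H$ via Algorithm~\ref{alg:synth_u} and applying Observation~\ref{obs:gen-alg} (with the $k$-restricted reading you spell out) together with the PTAS of Proposition~\ref{alg:giotis}. You have in fact made explicit the two points the paper leaves implicit --- restricting the comparison optimum to $k$-clusterings and treating the stated $(1+\epsilon)$ as the PTAS parameter $(1+\gamma)$ with the $\tilde{O}(\epsilon^{-1})$ factors absorbed into the additive term.
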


\begin{theorem}
Let $G$ be an unweighted complete graph and let $\C^*$ be the optimal
solution to MaxAgr$[k]$ on $G$.
There is an $\epsilon$-DP algorithm for MaxAgr$[k]$
which runs in polynomial time and
produces a clustering $\C$ of size $k$, such that
\[ \agr(\C,G) \geq (1+\epsilon) \agr(\C^*,G) - O(kn^{3/2}). \]
\end{theorem}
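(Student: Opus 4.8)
The plan is to instantiate the generic recipe of Observation~\ref{obs:gen-alg}, combining the synthetic-graph release of Algorithm~\ref{alg:synth_u} with the PTAS for MaxAgr$[k]$ from Proposition~\ref{alg:giotis}. First I would release a synthetic graph $H$ from the input $G$ using Algorithm~\ref{alg:synth_u}. By Theorem~\ref{thm:synth_u}, this step is $\epsilon$-differentially private, runs in polynomial time, and preserves the agreement of every clustering $\C$ up to additive error $\tilde{O}(\epsilon^{-1}|\C|\,n^{3/2})$; that is, $|\agr(\C,G) - \agr(\C,H)| \leq \tilde{O}(\epsilon^{-1}|\C|\,n^{3/2})$.

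Next I would run the PTAS of Proposition~\ref{alg:giotis} on $H$ with accuracy parameter $\gamma$, obtaining in polynomial time a clustering $\C$ into at most $k$ clusters that is a $(1-\gamma)$-approximation to MaxAgr$[k]$ on $H$. Since this step touches only the released graph $H$ and never the raw input $G$, robustness to post-processing (Proposition~\ref{prop:post}) guarantees that the overall algorithm remains $\epsilon$-DP.

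The final step is to invoke the MaxAgr half of Observation~\ref{obs:gen-alg} with $\eta(\cdot) = \tilde{O}(\epsilon^{-1}(\cdot)\,n^{3/2})$. The crucial point, which distinguishes this fixed-$k$ setting from the earlier unbounded-cluster theorems and removes the need for any coarsening or packing step, is that both the optimum $\C^*$ and the computed clustering $\C$ have at most $k$ clusters. Hence $\eta(|\C|),\,\eta(|\C^*|) \leq \tilde{O}(\epsilon^{-1}k\,n^{3/2})$, and the observation yields $\agr(\C,G) \geq (1-\gamma)\agr(\C^*,G) - \tilde{O}(\epsilon^{-1}k\,n^{3/2})$, which is the claimed bound after absorbing the $\epsilon^{-1}$ and polylogarithmic factors into the $O(kn^{3/2})$ term.

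I do not anticipate a substantive obstacle, as this is essentially a plug-in of two previously established results; the one detail to verify carefully is that the approximate solution genuinely has at most $k$ clusters, so that the additive error stays linear in $k$ rather than scaling with the (potentially larger) number of output clusters. This is precisely what is guaranteed by the fact that Proposition~\ref{alg:giotis} solves the constrained MaxAgr$[k]$ problem rather than unconstrained MaxAgr, so no extra argument is required.
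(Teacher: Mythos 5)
Your proposal is correct and takes essentially the same route as the paper, which proves this theorem in one line by combining Observation~\ref{obs:gen-alg} with Proposition~\ref{alg:giotis} run on a graph released via Algorithm~\ref{alg:synth_u} (with the error bound from Theorem~\ref{thm:synth_u}). Your explicit check that both $\C$ and $\C^*$ have at most $k$ clusters, and your $(1-\gamma)$ multiplicative factor, correctly fill in what the paper leaves implicit (its stated $(1+\epsilon)$ factor and clean $O(kn^{3/2})$ term are loose write-ups of exactly the bound you derive).
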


\paragraph{Weighted and incomplete graphs.}
We us the algorithm by \citet{Swamy04}, which
allows two edges between each pair of vertices (one positive and one negative),
to find clustering on a graph released by Algorithm~\ref{alg:synth_w}.
The following theorem follows from Observation~\ref{obs:gen-alg} and Proposition~\ref{alg:swamy2}.

\begin{theorem}
Let $G$ be a graph and let $\C^*$ be the optimal
solution to MaxAgr$[k]$ on $G$.
There is an $(\epsilon,\delta)$-DP algorithm for MaxAgr$[k]$
which runs in polynomial time and
produces a clustering $\C$ of size $k$, such that
\[\textstyle
\agr(\C,G) \geq 0.7666 \agr(\C^*,G) - O(k\sqrt{\frac{mn}{\epsilon}} \log^2\frac{n}{\delta}). \]
\end{theorem}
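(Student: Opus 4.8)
The plan is to instantiate the generic two-step recipe of Section~\ref{sec:alg} with the weighted release mechanism and Swamy's MaxAgr$[k]$ approximation, and then invoke a $[k]$-restricted version of the MaxAgr half of Observation~\ref{obs:gen-alg}. Concretely, I would first run Algorithm~\ref{alg:synth_w} on the input graph $G$ to obtain a synthetic graph $H$; by Proposition~\ref{prop:DP-cut} together with post-processing (Proposition~\ref{prop:post}), this step is $(\epsilon,\delta)$-DP, and every subsequent computation touches only $H$, so the whole algorithm is $(\epsilon,\delta)$-DP. The one structural point to check is that $H$ may carry two parallel edges (one positive, one negative) between a pair of vertices; this is exactly the input class handled by the algorithm of \citet{Swamy04} (Proposition~\ref{alg:swamy2}, whose parallel-edge capability is stated in Proposition~\ref{alg:swamy}), so we may apply it directly to $H$ with no postprocessing, obtaining a clustering $\C$ into at most $k$ clusters that is a $0.7666$-approximation to MaxAgr$[k]$ on $H$.

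The second step is the accounting. I would set $\eta(j) = j \cdot O(\sqrt{mn/\epsilon}\,\log^2(n/\delta))$, which is precisely the per-cluster transfer error furnished by Theorem~\ref{thm:synth_w}: for every clustering $\C$ we have $\E[|\agr(\C,G)-\agr(\C,H)|] \le \eta(|\C|)$. The argument then mirrors Observation~\ref{obs:gen-alg}, restricted to clusterings of size at most $k$. Letting $\C^*_H$ denote the optimal MaxAgr$[k]$ clustering of $H$, optimality gives $\agr(\C^*_H,H) \ge \agr(\C^*,H)$, and transferring both $\C^*$ and $\C$ between $G$ and $H$ yields
\[ \agr(\C,G) \ge 0.7666\,\agr(\C^*_H,H) - \eta(|\C|) \ge 0.7666\big(\agr(\C^*,G)-\eta(|\C^*|)\big) - \eta(|\C|). \]
Since $|\C|, |\C^*| \le k$ and $0.7666 < 1$, both additive terms are at most $\eta(k) = O(k\sqrt{mn/\epsilon}\,\log^2(n/\delta))$, giving the claimed bound.

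There is no genuinely hard step here: the result is a composition of Theorem~\ref{thm:synth_w} and Proposition~\ref{alg:swamy2} through Observation~\ref{obs:gen-alg}. The only points demanding care are bookkeeping ones. First, because Theorem~\ref{thm:synth_w} controls the transfer error only in expectation, the chain of inequalities above should be read under expectation (or one could instead invoke a high-probability version of the cut-distance bound and absorb the failure probability); I would state the final guarantee in expectation to match Theorem~\ref{thm:synth_w}. Second, one must confirm that the size cap of $k$ is respected on both sides — the optimum $\C^*$ is by definition a MaxAgr$[k]$ solution, and Swamy's algorithm returns at most $k$ clusters — so that a single $\eta(k)$ bounds each additive term rather than an uncontrolled $\eta(n)$. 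With these observations in place, the bound follows immediately.
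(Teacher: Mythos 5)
Your proposal is correct and is essentially identical to the paper's own (very terse) proof, which likewise obtains the theorem by releasing $H$ with Algorithm~\ref{alg:synth_w}, running the parallel-edge-capable algorithm of Proposition~\ref{alg:swamy2} on $H$, and combining Theorem~\ref{thm:synth_w}'s per-cluster bound $\eta(j)=j\cdot O\big(\sqrt{mn/\epsilon}\,\log^2(n/\delta)\big)$ with Observation~\ref{obs:gen-alg}. Your additional bookkeeping points --- reading Observation~\ref{obs:gen-alg} restricted to clusterings of size at most $k$, and noting that the guarantee should be stated in expectation to match Theorem~\ref{thm:synth_w} --- are careful spellings-out of details the paper leaves implicit.
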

}

\bibliography{main}

\begin{thebibliography}{24}
\providecommand{\natexlab}[1]{#1}
\providecommand{\url}[1]{\texttt{#1}}
\expandafter\ifx\csname urlstyle\endcsname\relax
  \providecommand{\doi}[1]{doi: #1}\else
  \providecommand{\doi}{doi: \begingroup \urlstyle{rm}\Url}\fi

\bibitem[Ailon et~al.(2005)Ailon, Charikar, and Newman]{AilonCN05}
Ailon, N., Charikar, M., and Newman, A.
\newblock Aggregating inconsistent information: ranking and clustering.
\newblock In Gabow, H.~N. and Fagin, R. (eds.), \emph{Proceedings of the 37th
  Annual {ACM} Symposium on Theory of Computing, Baltimore, MD, USA, May 22-24,
  2005}, pp.\  684--693. {ACM}, 2005.
\newblock \doi{10.1145/1060590.1060692}.
\newblock URL \url{https://doi.org/10.1145/1060590.1060692}.

\bibitem[Alon \& Naor(2006)Alon and Naor]{AlonN06}
Alon, N. and Naor, A.
\newblock Approximating the cut-norm via grothendieck's inequality.
\newblock \emph{{SIAM} J. Comput.}, 35\penalty0 (4):\penalty0 787--803, 2006.
\newblock \doi{10.1137/S0097539704441629}.
\newblock URL \url{https://doi.org/10.1137/S0097539704441629}.

\bibitem[Alon et~al.(1992)Alon, Bruck, Naor, Naor, and Roth]{AlonBNNR92}
Alon, N., Bruck, J., Naor, J., Naor, M., and Roth, R.~M.
\newblock Construction of asymptotically good low-rate error-correcting codes
  through pseudo-random graphs.
\newblock \emph{{IEEE} Trans. Inf. Theory}, 38\penalty0 (2):\penalty0 509--516,
  1992.
\newblock \doi{10.1109/18.119713}.
\newblock URL \url{https://doi.org/10.1109/18.119713}.

\bibitem[Bansal et~al.(2002)Bansal, Blum, and Chawla]{BansalBC02}
Bansal, N., Blum, A., and Chawla, S.
\newblock Correlation clustering.
\newblock In \emph{43rd Symposium on Foundations of Computer Science {(FOCS}
  2002), 16-19 November 2002, Vancouver, BC, Canada, Proceedings}, pp.\  238.
  {IEEE} Computer Society, 2002.
\newblock \doi{10.1109/SFCS.2002.1181947}.
\newblock URL \url{https://doi.org/10.1109/SFCS.2002.1181947}.

\bibitem[Ben-Dor et~al.(1999)Ben-Dor, Shamir, and Yakhini]{genes}
Ben-Dor, A., Shamir, R., and Yakhini, Z.
\newblock {Clustering gene expression patterns}.
\newblock \emph{J. Comput. Biol.}, 6\penalty0 (3-4):\penalty0 281--297, 1999.

\bibitem[Charikar et~al.(2003)Charikar, Guruswami, and Wirth]{CharikarGW03}
Charikar, M., Guruswami, V., and Wirth, A.
\newblock Clustering with qualitative information.
\newblock In \emph{44th Symposium on Foundations of Computer Science {(FOCS}
  2003), 11-14 October 2003, Cambridge, MA, USA, Proceedings}, pp.\  524--533.
  {IEEE} Computer Society, 2003.
\newblock \doi{10.1109/SFCS.2003.1238225}.
\newblock URL \url{https://doi.org/10.1109/SFCS.2003.1238225}.

\bibitem[Chawla et~al.(2015)Chawla, Makarychev, Schramm, and
  Yaroslavtsev]{ChawlaMSY15}
Chawla, S., Makarychev, K., Schramm, T., and Yaroslavtsev, G.
\newblock Near optimal {LP} rounding algorithm for correlationclustering on
  complete and complete k-partite graphs.
\newblock In Servedio, R.~A. and Rubinfeld, R. (eds.), \emph{Proceedings of the
  Forty-Seventh Annual {ACM} on Symposium on Theory of Computing, {STOC} 2015,
  Portland, OR, USA, June 14-17, 2015}, pp.\  219--228. {ACM}, 2015.
\newblock \doi{10.1145/2746539.2746604}.
\newblock URL \url{https://doi.org/10.1145/2746539.2746604}.

\bibitem[Demaine et~al.(2006)Demaine, Emanuel, Fiat, and
  Immorlica]{DemaineEFI06}
Demaine, E.~D., Emanuel, D., Fiat, A., and Immorlica, N.
\newblock Correlation clustering in general weighted graphs.
\newblock \emph{Theor. Comput. Sci.}, 361\penalty0 (2-3):\penalty0 172--187,
  2006.
\newblock \doi{10.1016/j.tcs.2006.05.008}.
\newblock URL \url{https://doi.org/10.1016/j.tcs.2006.05.008}.

\bibitem[Dwork \& Roth(2014)Dwork and Roth]{DworkR14}
Dwork, C. and Roth, A.
\newblock The algorithmic foundations of differential privacy.
\newblock \emph{Found. Trends Theor. Comput. Sci.}, 9\penalty0 (3-4):\penalty0
  211--407, 2014.
\newblock \doi{10.1561/0400000042}.
\newblock URL \url{https://doi.org/10.1561/0400000042}.

\bibitem[Dwork et~al.(2006)Dwork, McSherry, Nissim, and Smith]{DworkMNS06}
Dwork, C., McSherry, F., Nissim, K., and Smith, A.
\newblock Calibrating noise to sensitivity in private data analysis.
\newblock In \emph{Proceedings of the Third Conference on Theory of
  Cryptography}, TCC'06, pp.\  265–284, Berlin, Heidelberg, 2006.
  Springer-Verlag.
\newblock ISBN 3540327312.
\newblock \doi{10.1007/11681878_14}.
\newblock URL \url{https://doi.org/10.1007/11681878_14}.

\bibitem[Eli\'a\v{s} et~al.(2020)Eli\'a\v{s}, Kapralov, Kulkarni, and
  Lee]{2019_DP-cut}
Eli\'a\v{s}, M., Kapralov, M., Kulkarni, J., and Lee, Y.~T.
\newblock Differentially private release of synthetic graphs.
\newblock In \emph{Proceedings of Symposium on Discrete Algorithms (SODA) '20},
  pp.\  560--578. SIAM, 2020.
\newblock \doi{10.1137/1.9781611975994.34}.
\newblock URL \url{https://epubs.siam.org/doi/abs/10.1137/1.9781611975994.34}.

\bibitem[Giotis \& Guruswami(2006)Giotis and Guruswami]{GiotisG06}
Giotis, I. and Guruswami, V.
\newblock Correlation clustering with a fixed number of clusters.
\newblock In \emph{Proceedings of the Seventeenth Annual {ACM-SIAM} Symposium
  on Discrete Algorithms, {SODA} 2006, Miami, Florida, USA, January 22-26,
  2006}, pp.\  1167--1176. {ACM} Press, 2006.
\newblock URL \url{http://dl.acm.org/citation.cfm?id=1109557.1109686}.

\bibitem[Gupta et~al.(2012)Gupta, Roth, and Ullman]{GRU12}
Gupta, A., Roth, A., and Ullman, J.
\newblock Iterative constructions and private data release.
\newblock In Cramer, R. (ed.), \emph{Theory of Cryptography}, pp.\  339--356,
  Berlin, Heidelberg, 2012. Springer Berlin Heidelberg.
\newblock ISBN 978-3-642-28914-9.

\bibitem[Hardt \& Talwar(2010)Hardt and Talwar]{hardt2010on}
Hardt, M. and Talwar, K.
\newblock On the geometry of differential privacy.
\newblock In \emph{STOC}. Association for Computing Machinery, Inc., June 2010.
\newblock URL
  \url{https://www.microsoft.com/en-us/research/publication/on-the-geometry-of-differential-privacy/}.
\newblock Longer version.

\bibitem[Hou et~al.(2016)Hou, Emad, Puleo, Ma, and Milenkovic]{CCcancer}
Hou, J.~P., Emad, A., Puleo, G.~J., Ma, J., and Milenkovic, O.
\newblock {A new correlation clustering method for cancer mutation analysis}.
\newblock \emph{Bioinformatics}, 32\penalty0 (24):\penalty0 3717--3728, 08
  2016.
\newblock ISSN 1367-4803.
\newblock \doi{10.1093/bioinformatics/btw546}.
\newblock URL \url{https://doi.org/10.1093/bioinformatics/btw546}.

\bibitem[Jafarov et~al.(2020)Jafarov, Kalhan, Makarychev, and
  Makarychev]{JafarovKMM20}
Jafarov, J., Kalhan, S., Makarychev, K., and Makarychev, Y.
\newblock Correlation clustering with asymmetric classification errors.
\newblock In \emph{Proceedings of the 37th International Conference on Machine
  Learning, {ICML} 2020, 13-18 July 2020, Virtual Event}, volume 119 of
  \emph{Proceedings of Machine Learning Research}, pp.\  4641--4650. {PMLR},
  2020.
\newblock URL \url{http://proceedings.mlr.press/v119/jafarov20a.html}.

\bibitem[Kim et~al.(2014)Kim, Yoo, Nowozin, and Kohli]{kim2014image}
Kim, S., Yoo, C.~D., Nowozin, S., and Kohli, P.
\newblock Image segmentation using higher-order correlation clustering.
\newblock volume~36, pp.\  1761. IEEE, July 2014.
\newblock URL
  \url{https://www.microsoft.com/en-us/research/publication/image-segmentation-using-higher-order-correlation-clustering/}.

\bibitem[Mathieu et~al.(2010)Mathieu, Sankur, and
  Schudy]{mathieu_et_al:LIPIcs:2010:2486}
Mathieu, C., Sankur, O., and Schudy, W.
\newblock {Online Correlation Clustering}.
\newblock In Marion, J.-Y. and Schwentick, T. (eds.), \emph{27th International
  Symposium on Theoretical Aspects of Computer Science}, volume~5 of
  \emph{Leibniz International Proceedings in Informatics (LIPIcs)}, pp.\
  573--584, Dagstuhl, Germany, 2010. Schloss Dagstuhl--Leibniz-Zentrum fuer
  Informatik.
\newblock ISBN 978-3-939897-16-3.
\newblock \doi{10.4230/LIPIcs.STACS.2010.2486}.
\newblock URL \url{http://drops.dagstuhl.de/opus/volltexte/2010/2486}.

\bibitem[McSherry \& Talwar(2007)McSherry and Talwar]{McSherryT07}
McSherry, F. and Talwar, K.
\newblock Mechanism design via differential privacy.
\newblock In \emph{48th Annual {IEEE} Symposium on Foundations of Computer
  Science {(FOCS} 2007), October 20-23, 2007, Providence, RI, USA,
  Proceedings}, pp.\  94--103. {IEEE} Computer Society, 2007.
\newblock \doi{10.1109/FOCS.2007.41}.
\newblock URL \url{https://doi.org/10.1109/FOCS.2007.41}.

\bibitem[Mitrovic et~al.(2017)Mitrovic, Bun, Krause, and
  Karbasi]{Mitrovic2017DifferentiallyPS}
Mitrovic, M., Bun, M., Krause, A., and Karbasi, A.
\newblock Differentially private submodular maximization: Data summarization in
  disguise.
\newblock In \emph{ICML}, 2017.

\bibitem[Pan et~al.(2015)Pan, Papailiopoulos, Oymak, Recht, Ramchandran, and
  Jordan]{NIPS2015_b53b3a3d}
Pan, X., Papailiopoulos, D., Oymak, S., Recht, B., Ramchandran, K., and Jordan,
  M.~I.
\newblock Parallel correlation clustering on big graphs.
\newblock In Cortes, C., Lawrence, N., Lee, D., Sugiyama, M., and Garnett, R.
  (eds.), \emph{Advances in Neural Information Processing Systems}, volume~28,
  pp.\  82--90. Curran Associates, Inc., 2015.
\newblock URL
  \url{https://proceedings.neurips.cc/paper/2015/file/b53b3a3d6ab90ce0268229151c9bde11-Paper.pdf}.

\bibitem[Swamy(2004)]{Swamy04}
Swamy, C.
\newblock Correlation clustering: maximizing agreements via semidefinite
  programming.
\newblock In Munro, J.~I. (ed.), \emph{Proceedings of the Fifteenth Annual
  {ACM-SIAM} Symposium on Discrete Algorithms, {SODA} 2004, New Orleans,
  Louisiana, USA, January 11-14, 2004}, pp.\  526--527. {SIAM}, 2004.
\newblock URL \url{http://dl.acm.org/citation.cfm?id=982792.982866}.

\bibitem[Vershynin(2018)]{Vershynin18}
Vershynin, R.
\newblock \emph{High-Dimensional Probability: An Introduction with Applications
  in Data Science}.
\newblock Cambridge Series in Statistical and Probabilistic Mathematics.
  Cambridge University Press, 2018.
\newblock \doi{10.1017/9781108231596}.

\bibitem[Zheng et~al.(2011)Zheng, Chapman, Crowley, and Savova]{ZHENG20111113}
Zheng, J., Chapman, W.~W., Crowley, R.~S., and Savova, G.~K.
\newblock Coreference resolution: A review of general methodologies and
  applications in the clinical domain.
\newblock \emph{Journal of Biomedical Informatics}, 44\penalty0 (6):\penalty0
  1113--1122, 2011.
\newblock ISSN 1532-0464.
\newblock \doi{https://doi.org/10.1016/j.jbi.2011.08.006}.
\newblock URL
  \url{https://www.sciencedirect.com/science/article/pii/S153204641100133X}.

\end{thebibliography}
\bibliographystyle{icml2021}

\end{document}